\documentclass{article}


\usepackage[final,nonatbib]{neurips_glfrontiers_2022}




\usepackage[utf8]{inputenc} 
\usepackage[T1]{fontenc}    
\usepackage{hyperref}       
\usepackage{url}            
\usepackage{booktabs}       
\usepackage{amsfonts}       
\usepackage{nicefrac}       
\usepackage{microtype}      
\usepackage{xcolor}         

\usepackage{amssymb,bm,amsthm}
\usepackage{enumerate}
\usepackage{color, colortbl}
\usepackage{booktabs, multirow}
\definecolor{darkgreen}{rgb}{0, 0.5, 0}
\definecolor{red}{rgb}{1, 0, 0}
\definecolor{purple}{rgb}{0.5, 0, 0.5}
\usepackage{chngpage}
\usepackage{comment}
\usepackage{mfirstuc}
\usepackage{mathtools}
\usepackage{lettrine}

\newcommand\ie{\textit{i.e.,}}
\newcommand\eg{\textit{e.g.,}}

\newcommand\etc{\textit{etc.}}

\newcommand{\norm}[1]{\left\lVert#1\right\rVert}

\newcommand{\beq}{\begin{equation}}
\newcommand{\eeq}{\end{equation}}
\newcommand{\beqnn}{\begin{equation*}}
\newcommand{\eeqnn}{\end{equation*}}
\newcommand{\beqy}{\begin{eqnarray}}
\newcommand{\eeqy}{\end{eqnarray}}
\newcommand{\beqynn}{\begin{eqnarray*}}
\newcommand{\eeqynn}{\end{eqnarray*}}
\newcommand{\bit}{\begin{itemize}}
\newcommand{\eit}{\end{itemize}}
\newcommand{\ben}{\begin{enumerate}}
\newcommand{\een}{\end{enumerate}}
\newcommand{\bex}{\begin{example}}
\newcommand{\eex}{\end{example}}


\newcommand{\balg}[1]{\begin{algorithm} \caption{#1}}
\newcommand{\ealg}{\end{algorithm}}

\newcommand{\balgc}{\begin{algorithmic}[1]}
\newcommand{\ealgc}{\end{algorithmic}}

\newcommand{\bary}{\begin{array}}
\newcommand{\eary}{\end{array}}
\newcommand{\bmx}{\begin{bmatrix}}
\newcommand{\emx}{\end{bmatrix}}
\newcommand{\bsmx}{\left[\begin{smallmatrix}}
\newcommand{\esmx}{\end{smallmatrix}\right]}
\newcommand{\bmxc}[1]{\left[\begin{array}{@{}#1@{}}}
\newcommand{\emxc}{\end{array}\right]}
\newcommand{\bcn}{\begin{center}}
\newcommand{\ecn}{\end{center}}



\newcommand{\diag}{\mathrm{diag}}


\newcommand{\Rbb}{{\mathbb{R}}}





\renewcommand{\H}{\boldsymbol{H}}

\newcommand{\X}{{\boldsymbol{X}}}


\renewcommand{\u}{\boldsymbol{u}}

\newcommand{\x}{{\boldsymbol{x}}}









\providecommand{\norm}[1]{\lVert#1\rVert}

\providecommand{\abs}[1]{\left| #1 \right|}

\newenvironment{theorem}[2][Theorem]{\begin{trivlist}
		\item[\hskip \labelsep {\bfseries #1}\hskip \labelsep {\bfseries #2.}]}{\end{trivlist}}

\usepackage{times}
\usepackage{epsfig}
\usepackage{graphicx}
\usepackage{graphics}
\usepackage{amsmath}
\usepackage{amssymb}
\newtheorem{definition}{Definition}
\makeatletter
\usepackage{subfig}
\newcommand*{\rom}[1]{\expandafter\@slowromancap\romannumeral #1@}
\makeatother

\title{Complete the Missing Half: Augmenting Aggregation Filtering with Diversification for Graph Convolutional Networks}

%

\author{
Sitao Luan$^{1,2,*}$, Mingde Zhao$^{1,2,*}$, Chenqing Hua$^{1,2*}$, Xiao-Wen Chang$^{1}$, Doina Precup$^{1,2,3}$\\
\{sitao.luan@mail, mingde.zhao@mail, chenqing.hua@mail, chang@cs, dprecup@cs\}.mcgill.ca\\
$^1$McGill University; $^2$Mila; $^3$DeepMind\\
$^{*}$Equal Contribution
}

\begin{document}

\maketitle

\begin{abstract}
   The core operation of current Graph Neural Networks (GNNs) is the \textit{aggregation} enabled by the graph Laplacian or message passing, which filters the neighborhood information of nodes. Though effective for various tasks, in this paper, we show that they are potentially a problematic factor underlying all GNN models for learning on certain datasets, as they force the node representations similar, making the nodes gradually lose their identity and become indistinguishable. Hence, we augment the aggregation operations with their dual, \ie{} diversification operators that make the node more distinct and preserve the identity. Such augmentation replaces the aggregation with a two-channel filtering process that, in theory, is beneficial for enriching the node representations. In practice, the proposed two-channel filters can be easily patched on existing GNN methods with diverse training strategies, including spectral and spatial (message passing) methods. In the experiments, we observe desired characteristics of the models and significant performance boost upon the baselines on $9$ node classification tasks. \footnote{See the follow-up work in \cite{luan2021heterophily,luan2022revisiting}}
\end{abstract}

\section{Introduction}
\label{sec:introduction}
As a generic data structure, graph is capable of modeling complex relations among objects in many real-world problems \cite{liao2019lanczos, monti2017geometric, defferrard2016fast}. Motivated by the success of Convolutional Neural Networks (CNNs) \cite{lecun1998gradient} on images, graph convolution \cite{wu2019survey} is defined on the graph Fourier domain and the node spatial neighborhood domain \cite{zhang2019graph}, respectively, in the form of spectral- and spatial-based methods. Based on the $2$ methodologies, different (linear) graph filters and (non-linear) deep learning techniques \cite{lecun2015deep} are combined, giving rise to Graph Neural Networks (GNNs), achieving remarkable progress \cite{bruna2014spectral,hamilton2017inductive,gilmer2017neural,kipf2016classification,velivckovic2017attention,luan2019break,luan2022revisiting}.

Most existing graph filters can be viewed as operators that aggregate node information from its direct neighbors. Different graph filters yield different spectral GNNs or spatial aggregation functions. Among them, the most commonly used is the \textit{renormalized affinity matrix} \cite{kipf2016classification}. By adding an identity matrix to the adjacency matrix, \ie{} a self-loop in the graph topology, renormalized affinity matrix is created as a low-pass (LP) filter \cite{maehara2019revisiting} mainly capturing low-frequency signals, which are locally smooth features across the whole graph \cite{wu2019simplifying}. Aggregation processes, in the form of message passing used in spatial-based methods, as in \eg{} GraphSAGE \cite{hamilton2017inductive} and GraphSAINT \cite{zeng2019graphsaint}, are also node-level LP filters which make nodes become similar to their neighbors.

The main idea of neighborhood feature aggregation is to exploit the intrinsic geometry of the data distribution: if two data points are close (or connected) to each other on the manifold, they should also be close to each other in the representation space. This assumption is usually referred to as manifold (local invariance) \cite{belkin2002laplacian,he2004locality,cai2009probabilistic,cai2010graph}, assortative mixing (assortativity) \cite{newman2003mixing}, homophily \cite{mcpherson2001birds,luan2021heterophily,lim2021large} or smoothness \cite{kalofolias2016learn,luan2022we} assumption, which plays an essential role in the development of various kinds of algorithms including dimensionality reduction \cite{belkin2002laplacian} and semi-supervised learning \cite{zhou2004learning}\footnote{In this paper, we do not distinguish the name of this assumption.}. This assumption naturally holds in many real world networks \cite{mcpherson2001birds,jiang2013assortative}, \eg{} social networks, citation networks, evolutionary biology \etc. However, in contrast to homophily, there also exists a large number of heterophily networks where individuals with diverse characteristics tend to gather in the same group \cite{rogers2010diffusion}, \eg{} dating networks \cite{zhu2020generalizing} and fraudsters in online purchasing networks \cite{pandit2007netprobe}. On these networks, there is no strong reason to impose smoothness assumption and on the contrary, non-smoothness pattern between nodes turns out to be important.




With the above in mind, in this paper we first propose a method to measure the smoothness of the input features and output labels of an attributed graph based on Dirichlet energy and graph signal energy. With the proposed method, we measure the smoothness of $9$ real world datasets, which shows that signal defined on graph is generally a mixture of smooth and non-smooth graph signals and each part plays an indispensable role. Motivated by this discovery, we argue that, to learn richer representations, the distinctive information between nodes should also be extracted. Hence, we design a two-channel filterbank (FB) \cite{ekambaram2014graph} GNN framework which use low-pass (LP) and high-pass (HP) filters together to learn the smooth and non-smooth components, respectively. FB-GNN framework can easily be plugged into spatial methods, with LP filter for aggregation operation and HP filter for diversification operation. With experiments on $9$ real world datasets, we find that the the HP channel indeed plays and important role in the representation learning, and one-channel baseline methods can gain significant performance boost after being augmented by two-channel methods.


\section{Preliminaries}
\label{sec:prelimiary_notation}
After introducing the prerequisites, in this section, we formalize the idea behind graph signal filtering. We use bold fonts for vectors (\eg{} $\bm{v}$), vector blocks (\eg{} $\bm{V}$) and matrix blocks (\eg{} $\bm{V}_i$). Suppose we have an undirected connected graph $\mathcal{G}=(\mathcal{V},\mathcal{E}, A)$ without bipartite component, where $\mathcal{V}$ is the node set with $\abs{\mathcal{V}}=N$, $\mathcal{E}$ is the edge set, $A \in \mathbb{R}^{N\times N}$ is a symmetric adjacency matrix with $A_{ij}=1$ if and only if $e_{ij} \in \mathcal{E}$ otherwise $A_{ij}=0$, $D$ is the diagonal degree matrix, \ie{} $D_{ii} = \sum_j A_{ij}$ and $\mathcal{N}_i=\{j: e_{ij} \in \mathcal{E}\}$ is the neighborhood set of node $i$. A graph signal is a vector $\bm{x} \in \mathbb{R}^N$ defined on $\mathcal{V}$, where $x_i$ is defined on the node $i$. We also have a feature matrix $\bm{X} \in \mathbb{R}^{N\times F}$ whose columns are graph signals and each node $i$ has a feature vector $\bm{X}_{i,:}$ with dimension $F$, which is the $i$-th row of $\bm{X}$. 

\subsection{Graph Laplacian and Affinity Matrix} \label{sec:laplacian_affinity_matrix}
The (Combinatorial) graph Laplacian is defined as $L = D - A$, which is a Symmetric Positive Semi-Definite (SPSD) matrix\cite{chung1997spectral}. Its eigendecomposition gives $L=U\Lambda U^T$, where the columns of $U\in \Rbb^{N\times N}$ are orthonormal eigenvectors, namely the \textit{graph Fourier basis}, $\Lambda = \diag(\lambda_1, \ldots, \lambda_N)$ with $\lambda_1 \leq \cdots \leq \lambda_N$ and these eigenvalues are also called \textit{frequencies}. The graph Fourier transform of the graph signal $\x$ is defined as $\bm{x}_\mathcal{F} = U^{-1} \bm{x} = U^{T} \bm{x} = [\u_1^T\x, \ldots, \u_N^T\x]^T$, where $\bm{u}_i^T \bm{x}$ is the component of $\bm{x}$ in the direction of $\bm{u_i}$. 

A smaller $\lambda_i$ indicates a smoother basis function $\bm{u_i}$ defined on $\mathcal{G}$ \cite{dakovic2019local}, which means any two elements of $\bm{u_i}$ corresponding to two connected nodes will have more similar values. This is because finding the eigenvalues and eigenvectors of graph Laplacian is actually solving a series of conditioned minimization problems relevant to the smoothness of the function defined on $\mathcal{G}$.  

Some variants of graph Laplacians are commonly used in practice, \eg{} the symmetric normalized Laplacian $L_{\text{sym}} = D^{-1/2} L D^{-1/2} = I-D^{-1/2} A D^{-1/2}$, the random walk normalized Laplacian $L_{\text{rw}} = D^{-1} L = I - D^{-1} A$. $L_{\text{rw}}$ and $L_{\text{sym}}$ share the same eigenvalues, which are inside $[0,2)$, and their corresponding eigenvectors satisfy $\bm{u}_{\text{rw}}^i = D^{-1/2} \bm{u}_{\text{sym}}^i$. 

The affinity (transition) matrix derived from $L_{\text{rw}}$ is defined as $A_\text{rw} = I - L_\text{rw} = D^{-1} A$ and its eigenvalues $\lambda_i(A_\text{rw}) = 1-\lambda_i(L_\text{rw}) \in (-1,1]$. Similarly, $A_\text{sym} = I-L_\text{sym} = D^{-1/2} A D^{-1/2}$ is an affinity matrix as well. Renormalized affinity matrix is introduced in \cite{kipf2016classification} and defined as $\hat{A}_{\text{rw}} = \tilde{D}^{-1} \tilde{A}$, where $\tilde{A} \equiv A+I, \tilde{D} \equiv D+I$ and $\lambda(\hat{A}_\text{rw}) \in (-1,1]$. It essentially defines a random walk matrix on $\mathcal{G}$ with a self-loop added to each node in $\mathcal{V}$ and is widely used in GCN as follows,
\begin{equation}
    \label{eq:gcn_original}
   \bm{Y} = \text{softmax} (\hat{A}_\text{rw} \; \text{ReLU} (\hat{A}_\text{rw} \bm{X} W_0 ) \; W_1 )
\end{equation}
where $W_0 \in \Rbb^{F\times F_1}$ and $W_1 \in \Rbb^{F_1\times O}$ are parameter matrices. $\hat{L}_\text{rw}$ can be defined as $I-\hat{A}_\text{rw}$. $\hat{A}_\text{sym} \equiv \tilde{D}^{-1/2} \tilde{A} \tilde{D}^{-1/2}$ can also be applied in GCN and $\hat{L}_{\text{sym}} = I - \hat{A}_\text{sym}$. Specifically, the nature of transition matrix makes $\hat{A}_{\text{rw}}$ behave as a mean aggregator $(\hat{A}_{\text{rw}} \bm{x})_i = \sum_{j\in\{\mathcal{N}_i \cup i\}} {x}_j/(D_{ii}+1)$ which is applied in \cite{hamilton2017inductive} and is important to bridge the gap between spatial- and spectral-based graph convolution methods.

\subsection{Measure of Smoothness and (Dirichlet) Energy}

Dirichlet Energy is often used to measure how variable a function is \cite{evans1998partial} and for signal defined on graph, it can measure the global smoothness of the signal \cite{shuman2013emerging,bronstein2016geometric, smith2018graph} and is defined as follows.

\begin{definition}(Dirichlet Energy) The Dirichlet energy of vector block $\bm{X}$ and column vector $\bm{x}$ defined on $\mathcal{G}$ are separately defined as 
\begin{equation} \label{def:dirichlet_energy}
    E_S^\mathcal{G}(\bm{X}) = tr(\bm{X}^T L \bm{X}), \ \ E_S^\mathcal{G}(\bm{x}) = \bm{x}^T L \bm{x}
\end{equation}
\end{definition}

Note that $E_S^\mathcal{G}$ is always non-negative since $L$ is SPSD. The graph signal energy is defined as follows.

\begin{definition}(Graph Signal Energy \cite{gavili2017shift, stankovic2018reduced})
The signal energy of vector block $\bm{X}$ and column vector $\bm{x}$ defined on undirected graph $\mathcal{G}$ are separately defined as  
\begin{equation} \label{def:energy}
    E^\mathcal{G}(\bm{X}) = tr(\bm{X}^T \bm{X}) , \ \ E^\mathcal{G}(\bm{x}) = \bm{x}^T \bm{x} 
\end{equation}
\end{definition}
The signal energy represents the amount of contents in a graph signal and we will draw the correlation between $E_S^\mathcal{G}$ and $E^\mathcal{G}$ and explain how they can be used to measure the smoothness and non-smoothness of a graph (block) signal.

Take column vector $\bm{x}$ for example, $E_S^\mathcal{G}(\bm{x})$ can be written as, 
$$\bm{x}^T L \bm{x} = \sum\limits_i \lambda_i (u_i^T \bm{x})^T u_i^T \bm{x} = \sum\limits_i \lambda_i \norm{u_i^T \bm{x}}_2^2 $$
The frequency $\lambda_i$ before $\norm{u_i^T \bm{x}}_2^2$ can be considered as a scalar weight and $E_S^\mathcal{G}(\bm{x})$ focuses on measuring the component of $\bm{x}$ in the direction of non-smooth $\bm{u_i}$, who has a large weight $\lambda_i$. A small $E_S^\mathcal{G}(\bm{x})$ means $\bm{x}$ does not contain much non-smooth components. $E^\mathcal{G}(\bm{x})$ can be written as
$$\bm{x}^T  \bm{x} = \sum\limits_i  (u_i^T \bm{x})^T u_i^T \bm{x} = \sum\limits_i  \norm{u_i^T \bm{x}}_2^2$$

Signal $\bm{x}$ can be decomposed into smooth and non-smooth components, and the amount the non-smooth component can be measured by
$$E_{NS}^\mathcal{G}(\bm{x}) = E^\mathcal{G}(\bm{x}) -  E_{S}^\mathcal{G}(\bm{x}) = \bm{x}^T (I-L) \bm{x} = \sum\limits_i (1-\lambda_i) \norm{u_i^T \bm{x}}_2 $$

Note that $E_{NS}^\mathcal{G}(\bm{x})$ can be negative  and a small $E_{NS}^\mathcal{G}(\bm{x})$ indicates that $\bm{x}$ is highly non-smooth.

Upon the above analysis, we define $S(\bm{x})$ to measure the smoothness of a signal as follows
\begin{equation}
\label{eq:def_smoothness}
    S(\bm{x}) = \frac{E_{S}^\mathcal{G}(\bm{x})}{E^\mathcal{G}(\bm{x})}, \ \ S(\bm{X}) = \frac{E_{S}^\mathcal{G}(\bm{X})}{E^\mathcal{G}(\bm{X})}
\end{equation}
Graph signal with a small $S$-value means it is a smooth function define on $\mathcal{G}$. $S$ can be different depends on the Laplacian we use to train GNN and $S$ can be larger than 1. In this paper, we use $L_{\text{sym}}$ and $\hat{L}_{\text{sym}}$ to measure the smoothness of input features $\bm{X}$ and labels $\bm{y}$ for different GNNs.

\section{Filterbanks in GNNs: From One Channel to Two}

\label{sec:graph_filterbank_networks}
In this section, we state why it is necessary to switch to the two-channel filtering process from only one-channel. Then, we propose the filterbank-GNN framework which can learn a mixture of smooth and non-smooth graph signals.

\subsection{Motivation}
\label{sec:motivation}

\begin{table*}[htbp]
  \centering
  \caption{Dataset Overview: Network Characteristics and $S$-values measured by $L_{\text{sym}}$}
  \resizebox{\textwidth}{!}{
    \begin{tabular}{p{5.9em}llllllllll}
    \toprule
    \toprule
    \multicolumn{2}{c}{datasets} & \multicolumn{1}{c}{Cornell} & \multicolumn{1}{c}{Wisconsin} & \multicolumn{1}{c}{Texas} & \multicolumn{1}{c}{Actor} & \multicolumn{1}{c}{Chameleon} & \multicolumn{1}{c}{Squirrel} & \multicolumn{1}{c}{Cora} & \multicolumn{1}{c}{CiteSeer} & \multicolumn{1}{c}{PubMed} \\
    \midrule
    \multicolumn{1}{c}{\multirow{4}[2]{*}{Network Info}} & \multicolumn{1}{c}{\#nodes} & \multicolumn{1}{c}{183} & \multicolumn{1}{c}{251} & \multicolumn{1}{c}{183} & \multicolumn{1}{c}{7600} & \multicolumn{1}{c}{2277} & \multicolumn{1}{c}{5201} & \multicolumn{1}{c}{2708} & \multicolumn{1}{c}{3327} & \multicolumn{1}{c}{19717} \\
    \multicolumn{1}{c}{} & \multicolumn{1}{c}{\#edges} & \multicolumn{1}{c}{295} & \multicolumn{1}{c}{499} & \multicolumn{1}{c}{309} & \multicolumn{1}{c}{33544} & \multicolumn{1}{c}{36101} & \multicolumn{1}{c}{217073} & \multicolumn{1}{c}{5429} & \multicolumn{1}{c}{4732} & \multicolumn{1}{c}{44338} \\
    \multicolumn{1}{c}{} & \multicolumn{1}{c}{\#features} & \multicolumn{1}{c}{1703} & \multicolumn{1}{c}{1703} & \multicolumn{1}{c}{1703} & \multicolumn{1}{c}{931} & \multicolumn{1}{c}{2325} & \multicolumn{1}{c}{2089} & \multicolumn{1}{c}{1433} & \multicolumn{1}{c}{3703} & \multicolumn{1}{c}{500} \\
    \multicolumn{1}{c}{} & \multicolumn{1}{c}{\#classes} & \multicolumn{1}{c}{5} & \multicolumn{1}{c}{5} & \multicolumn{1}{c}{5} & \multicolumn{1}{c}{5} & \multicolumn{1}{c}{5} & \multicolumn{1}{c}{5} & \multicolumn{1}{c}{7} & \multicolumn{1}{c}{6} & \multicolumn{1}{c}{3} \\
    \midrule
    \multicolumn{1}{c}{\multirow{3}[2]{*}{S-values}} & \multicolumn{1}{c}{input feature} & \multicolumn{1}{c}{0.904} & \multicolumn{1}{c}{0.873} & \multicolumn{1}{c}{0.854} & \multicolumn{1}{c}{0.901} & \multicolumn{1}{c}{0.99} & \multicolumn{1}{c}{0.987} & \multicolumn{1}{c}{0.862} & \multicolumn{1}{c}{0.799} & \multicolumn{1}{c}{0.832} \\
    \multicolumn{1}{c}{} & \multicolumn{1}{c}{label} & \multicolumn{1}{c}{0.883} & \multicolumn{1}{c}{0.877} & \multicolumn{1}{c}{0.909} & \multicolumn{1}{c}{0.836} & \multicolumn{1}{c}{0.747} & \multicolumn{1}{c}{0.782} & \multicolumn{1}{c}{0.288} & \multicolumn{1}{c}{0.35} & \multicolumn{1}{c}{0.501} \\
    \multicolumn{1}{c}{} & \multicolumn{1}{c}{diff (label - feature)} & \multicolumn{1}{c}{\cellcolor[rgb]{ .867,  .922,  .969}-0.021} & \multicolumn{1}{c}{\cellcolor[rgb]{ .988,  .894,  .839}0.004} & \multicolumn{1}{c}{\cellcolor[rgb]{ .988,  .894,  .839}0.055} & \multicolumn{1}{c}{\cellcolor[rgb]{ .867,  .922,  .969}-0.065} & \multicolumn{1}{c}{\cellcolor[rgb]{ .867,  .922,  .969}-0.243} & \multicolumn{1}{c}{\cellcolor[rgb]{ .867,  .922,  .969}-0.205} & \multicolumn{1}{c}{\cellcolor[rgb]{ .867,  .922,  .969}-0.574} & \multicolumn{1}{c}{\cellcolor[rgb]{ .867,  .922,  .969}-0.449} & \multicolumn{1}{c}{\cellcolor[rgb]{ .867,  .922,  .969}-0.331} \\
    \midrule
    \midrule
    \multicolumn{11}{p{64.065em}}{We use blue and red shades to demonstrate the relation between label and feature: the label of the blue shaded datasets is smoother than its feature and red datasets are less smooth.} \\
    \end{tabular}%
    }
  \label{tab:dataoverview}%
\end{table*}%

We measure the smoothness of $9$ frequently used benchmark datasets and present the results with the network characteristics for each task in Table \ref{tab:dataoverview}). It shows that the input features and ground truth labels of different datasets are all mixtures of smooth and non-smooth graph signals but in different proportions. Besides, it illustrates that different tasks have different demands of learning how to smoothen the input signals. For example, in \textit{Cora}, \textit{CiteSeer} and \textit{PubMed}, the ground truth labels are much smoother than the input features, such pattern motivates us to learn how to smoothen the input signals; while in \textit{Wisconsin} and \textit{Texas}, the labels are less smooth than the input features, thus there is no reason that we still learn how to smoothen the input signals. Therefore, to accommodate different situations, we  propose that we should learn both smooth and non-smooth components of the input features adaptively instead of merely extracting the smooth part. This motivates us to use filterbanks (LP and HP filters) to filter the signals in GNNs.

\paragraph{LP, HP Graph Filters and Filter Banks}
The multiplication of $L$ and $\bm{x}$ acts as a filtering operation over $\bm{x}$, adjusting the scale of the components of $\bm{x}$ in frequency domain. To see this, consider
\begin{equation} 
    \bm{x} = \sum_i \bm{u}_i  \bm{u}_i ^T \bm{x} , \ \ L\bm{x} = \sum_i \lambda_i  \bm{u}_i  \bm{u}_i ^T \bm{x} 
    \label{eq:explicit_conv_L}
\end{equation}
The projection $\u_i\u_i^T\x$ corresponding to a large $|\lambda_i|$ will be amplified, while the one corresponding to a small $|\lambda_i|$ will be suppressed. More specifically, a graph filter that filters out smooth (non-smooth) components is called HP (LP) filter. Generally, the Laplacian matrices ($L_{sym}$, $L_{rw}$, $\hat{L}_{sym}$, $\hat{L}_{rw}$) can be regarded as HP filters \cite{ekambaram2014graph} and affinity matrices ($A_\text{sym}$, $A_\text{rw}$, $\hat{A}_\text{sym}$, $\hat{A}_\text{rw}$) can be treated as LP filters \cite{maehara2019revisiting}. In general, we denote HP and LP filters as $L_\text{HP}$ and $L_\text{LP}$ respectively.

On the node level, left multiplying HP and LP filters on $\bm{x}$ can be understood as diversification and aggregation operations, respectively. For example, if we implement $L_{rw}$ and ${A}_\text{rw}$ on the $i$-th node, we have
\begin{align} \label{eq:node_level_L_P}
    &(L_\text{rw} \bm{x})_i = \sum_{j \in \mathcal{N}_i } \frac{1}{D_{ii}} (x_i-x_j),\ \ (A_\text{rw} \bm{x})_i = \sum_{j \in \mathcal{N}_i } \frac{1}{D_{ii}} x_j
\end{align}
Intuitively, HP filters depict the differences between one node and its neighbors; While LP filters focus on the similarity within a neighborhood, from which we can obtain missing or ``hidden'' features of one node. We believe that these two conjugate components are both indispensable to portray a node.

Mathematically, multiplying with LP filter (aggregation) is a linear projection, which will project the features to a fixed subspace. We will lose the expressive power by only using LP filter, and the missing half is the HP component of the learned signals, as $L_\text{LP} + L_\text{HP} = I$, which satisfies the perfect reconstruction property \cite{ekambaram2014graph}.

The two-channel linear filterbank which contains a set of filters $L_\text{LP}$ and $L_\text{HP}$ is widely used in graph signal processing \cite{ekambaram2013critically, ekambaram2014graph}, but are rarely used in graph neural networks. Inspired by this technique, we propose the two-channel filterbank GNNs in section \ref{sec:FB-GNN}, which can extract both smooth and non-smooth components from input features. 

\subsection{Filter Bank assisted GNNs (FB-GNNs)}\label{sec:FB-GNN}

\begin{figure}[htbp]
\centering
\includegraphics[width=0.5\textwidth]{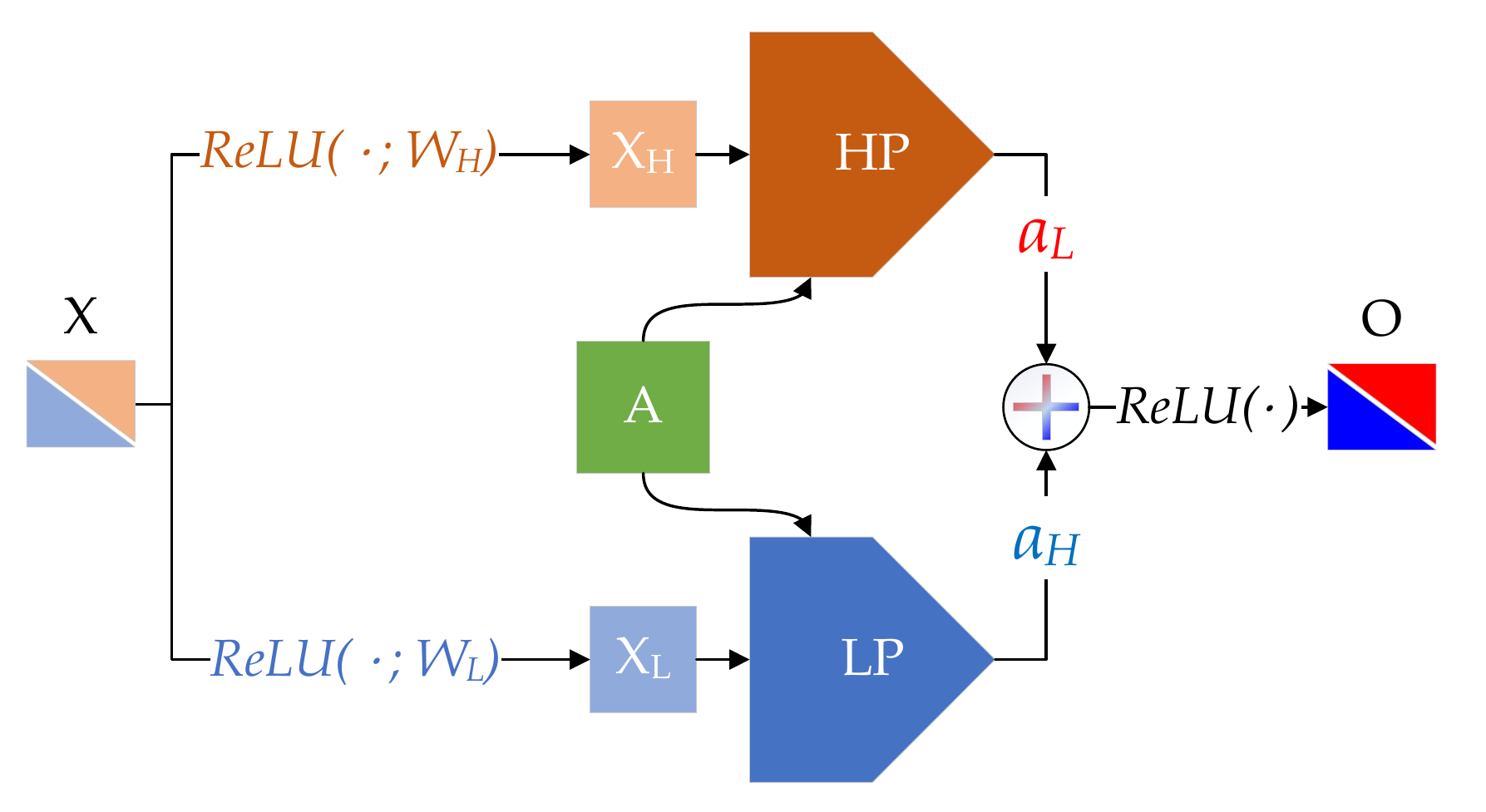}
\caption{Two-Channel Learning: Information needed for HP and LP filters, $X_H$ and $X_L$, are separately extracted from the input signal $X$ by nonlinear transformations. After being filtered by HP and LP, which are both derived upon adjacency matrix $A$, the filtered signals are again recombined adaptively to form the output $O$.}
\label{fig:two_pass}
\end{figure}


\paragraph{Spectral-based FB-GNNs}
We use previously defined $L_\text{LP}$ and $L_\text{HP}$ (see section \ref{sec:motivation}) to construct the two-channel FB-GNNs as follows (learning framework is provided in figure \ref{fig:two_pass})
\begin{align}
\label{eq:FB-GNN_spectral}
{\H}^{l}_L &= L_\text{LP} \text{ReLU}({\H^{l-1}} W^{l-1}_L), \ {{\H}^{l}_H} = L_\text{HP} \text{ReLU}({\H^{l-1}} W^{l-1}_H) \\
{\H^{l}} & = \text{ReLU}\left(\alpha_L^l \cdot {\H}^{l}_L + \alpha_H^l \cdot {\H}^{l}_H \right), \ \ l=1,\dots,n \nonumber
\end{align}
where ${\H^0}=\bm{X}$; $W_L^{l-1},W_H^{l-1} \in \mathbb{R}^{F_{l-1} \times F_l}$ are learnable parameter matrices for the non-linear feature extractor focusing on disentangling the smooth and non-smooth information from input ${\H^{l-1}}$, separately; $\alpha_L^l, \alpha_H^l \in [0,1]$ are learnable scalar parameters which can learn the relative importance of ${\H}^{l}_L$ and ${\H}^{l}_H$ and keep a balance between them; $l$ is the layer number and suppose the FB-GNN has $n$ layers. In this way, the hidden output ${\H}^{l}$ is able to learn a mixture of smooth and non-smooth signals.

\paragraph{Spatial-based FB-GNNs} Inspired by \eqref{eq:node_level_L_P} and \eqref{eq:FB-GNN_spectral}, the two-channel spatial-based method can be implemented by designing different aggregator (LP filter) and diversification operator (HP filter) as follows,
\begin{align} \label{eq:FB-GNN_spatial_hidden_output}
 (\bm{\hat{h}}_{i}^{l})_L & = \text{ReLU}({W}_L^{l-1} \bm{h}_{i}^{l-1}), \ \ (\bm{\hat{h}}_{i}^{l})_H = \text{ReLU}({W}_H^{l-1} \bm{h}_{i}^{l-1}) \nonumber \\ 
(\bm{{h}}_{i}^{l})_L & =  \sum_{j \in \{\mathcal{N}_i \cup i\} } \bm{w}_{ij} \left( (\bm{\hat{h}}_{i}^{l})_L + (\bm{\hat{h}}_{j}^{l})_L \right), \ 
 (\bm{{h}}_{i}^{l})_H = \sum_{j \in \{\mathcal{N}_i \cup i\} } \bm{w}_{ij} \left((\bm{\hat{h}}_{i}^{l})_H  - (\bm{\hat{h}}_{j}^{l})_H \right), \\
 \bm{{h}}_{i}^{l} = & \text{ReLU}\left(\alpha^l_L \cdot (\bm{{h}}_{i}^{l})_L + \alpha^l_H \cdot (\bm{{h}}_{i}^{l})_H\right), \ , i \in \mathcal{V}, \ l = 1,\dots,n \nonumber
\end{align}
where ${W}_L^{l-1}, {W}_H^{l-1} \in \mathbb{R}^{F_l \times F_{l-1}}$ are learnable parameter matrices to extract LP and HP features for two channels; $\bm{w}_{ij}$ is the connection weight between node $i$ and node $j$ derived from adjacency matrix, it can be a fixed value or a learnable attention coefficient such as \cite{velivckovic2017attention}; $\alpha^l_L, \alpha^l_H \in [0,1]$ are learnable scalar parameters; $l$ is the layer number.

\paragraph{Computational Cost: Parameters and Runtime}
The spectral two-channel learning introduces additionally one GCN operation and one weighted sum (with negligible costs introduced with non-linearity and weighted sum before output); For spatial methods, similarly, the two-channel learning introduces one additional node-wise subtraction and one additional weighted sum for training on each pair of nodes. Thus, the computational cost and the number of parameters are approximately doubled;

For runtime, overlooking the minor overhead of synchronization, the computations introduced by the additional pass are naturally parallelizable with the original pass (for their independently associated parameters) both in the forward and backward passes. Therefore, no significant additional computational time will be incurred on modern GPU architectures.

\section{Related Works}
\paragraph{Dirichlet energy} Dirichlet energy (more generally in $p$-Dirichlet form) is usually used as a regularizer or objective function to impose local neighborhood smoothness in various machine learning tasks, \eg{} spectral clustering \cite{belkin2002laplacian}, image processing \cite{elmoataz2008nonlocal,bougleux2009local,zheng2010graph}, non-negative matrix factorization \cite{cai2010graph}, matrix completion, principal component analysis (PCA), semi-supervised learning \cite{zhu2003semi,zhu2005harmonic,belkin2006manifold}. It has different names in different literature, \eg{} Laplacian regularizer \cite{zheng2010graph}, manifold regularizer \cite{cai2010graph}, quadratic energy function \cite{zhu2003semi}, \etc.
The definition of smoothness derived from Dirichlet energy in \eqref{eq:def_smoothness} can be considered as a continuous relaxed form of normalized cut (ratio cut) \cite{hagen1992new,stankovic2019graph}, which is closely related to graph partition problems. Instead of using Dirichlet energy as a part of loss function during training process, we point out that combining with graph signal energy, it can be used to measure the smoothness of the input features and output labels for a given learning task. With this, the necessity of learning the non-smoothness component can be confirmed.

\paragraph{Measuring Smoothness} The authors of \cite{pei2020geom} propose a node homophily to measure the smoothness of ground truth labels of dataset as follows,
\begin{equation*}
\frac{1}{|\mathcal{V}|} \sum_{v \in \mathcal{V}} \frac{ \# v \text { 's neighbors who have the same label as } v}{\# v \text{ 's neighbors }}
\end{equation*}
\cite{zhu2020generalizing} proposes edge homophily ratio, which is the fraction of edges that the connected nodes share the same label (\ie{}, intra-class edges). Both of these methods do not provide an extension definition on block vector. Thus, they fail to measure the smoothness of the input features and cannot be used to compare the difference of smoothness between the input features and labels. \cite{zhao2019pairnorm} proposes row-diff and col-diff to measure the average of all pairwise distances between the node features and the average of pairwise distances between columns of the representation matrix. But quantifying pairwise distance is inconsistent with the definition of smoothness introduced in section \ref{sec:introduction} which focuses on measuring the distance between connected nodes. \cite{luan2022revisiting} studies homophily from post-aggregation node similarity perspective. \cite{luan2022we} uses statistical hypothesis testing to detect the effect the edge bias.

\paragraph{On Addressing Heterophily} 
Geom-GCN \cite{pei2020geom} uses a geometric aggregation scheme and a bi-level aggregator to capture the information of structural neighborhoods, which can be distant nodes. These can efficiently take use of the geometric relationships defined in the latent space. H$_2$GCN \cite{zhu2020generalizing} designs ego- and neighbor-embedding separation, aggregation of higher-order neighborhoods, and combination of intermediate representations to generalize the limitation of existing GNNs beyond homophily setting. Non-local GNNs \cite{liu2020non} propose a simple and effective non-local aggregation framework with an efficient attention-guided sorting for GNNs.  CPGNN \cite{zhu2020graph} models label correlations through a compatibility matrix, which is beneficial for heterophilic graphs, and propagates a prior belief estimation into the GNN by using the compatibility matrix.  FAGCN \cite{bo2021beyond} learns edge-level aggregation weights as GAT \cite{velivckovic2017attention} but allows the weights to be negative, which enables the network to capture high-frequency components in the graph signals. GPRGNN \cite{chien2021adaptive} uses learnable weights that can be both positive and negative for feature propagation. This allows GPRGNN to adapt to heterophilic  graphs and  to handle both high- and low-frequency parts of the graph signals. BernNet \cite{he2021bernnet} designs a scheme to learn arbitrary graph spectral filters with Bernstein polynomial to address heterophily.

The aforementioned works design various tricks, trying to take use of multi-hop neighborhood information and capture long-range dependencies with the belief that heterophily problem could be alleviated with the help of the distant nodes. Although these methods show some promising results, the effectiveness is limited and do not jump out of the scope of neighborhood aggregation. In this paper, We target directly its cause, handling heterophily problem by seeking the distinctiveness between nodes with an additional channel to learn the non-smoothness components.
\section{Experiments}

\label{sec:experiments}

In this section, we first validate whether the two-channel filtering and learning procedure lead to better representation learning when patched on popular shallow GNN baselines\footnote{Source code submitted within supplementary materials and to be published after the review.}: GraphSAINT \cite{zeng2019graphsaint}, GraphSAGE \cite{hamilton2017inductive}, Graph Attention Network (GAT) \cite{velivckovic2017attention}, GCN \cite{kipf2016classification}, Geom-GCN-P (-S and -I) \cite{pei2020geom} and Graph Wavelet Neural Network (GWNN) \cite{xu2019wavelet}. Deeper GNNs are shown to have the potentials of mitigating the heterophily problem by extracting multi-hop neighborhood information. For them, we test the two-channel framework on two state-of-the-art methods GCN\rom{2} and GCN\rom{2}* \cite{chen2020simple}, with varied model depths. After these, we validate the effectiveness of each proposed component with a detailed ablation test.

The experiments are conducted in the form of node classification\footnote{See Appendix \ref{appendix:graph_classification} for experimental results on graph classification tasks.} under supervised learning setting and performed on $9$ datasets including \textit{Cornell}, \textit{Wisconsin}, \textit{Texas}, \textit{Actor}, \textit{Chameleon}, \textit{Squirrel}, \textit{Cora}, \textit{CiteSeer}, and \textit{PubMed} (details to be found in the appendix). Their rough characteristics are shown in Table \ref{tab:dataoverview}.

\subsection{Experimental Setup}

In supervised learning of shallow GNNs, we keep the same training configurations for GraphSAINT and FB-GraphSAINT on the $9$ datasets, which are the random walk sampler with length 2 (RW) setting\footnote{The name ``random walk sampler with length 2 setting'' is what the authors used in their paper and they use the name $PPI$-large-2 in their code.} in GraphSAINT \cite{zeng2019graphsaint};  for GWNN \cite{xu2019wavelet} and FB-GWNN, we use the same hyperparameters $s=1.0, t=10^{-4}$ on the $9$ datasets\footnote{This set of hyperparameters is the same as that of the original paper when training GWNN on Cora.}. Other GNNs and their two-channel variants are under the same experiment settings as \cite{hamilton2017inductive} and \cite{pei2020geom}. We use $\hat{A}_\text{sym}$ as low-pass spectral filter and $\hat{L}_\text{sym}$ as high-pass spectral filter \footnote{We will discuss other filters in Appendix \ref{appendix:discussion_filters}}.

For supervised learning on deep GNNs, GCN\rom{2}, GCN\rom{2}*, FB-GCN\rom{2}, and FB-GCN\rom{2}* use $\lambda=1.5$ and $\alpha=0.2$ on $Actor$ and $Squirrel$. Other deep models use the same training configurations as GCN\rom{2} \cite{chen2020simple} on the remaining 7 datasets.

For all experiments, we use the same 48\%/ 32\%/ 20\% splits for training,
validation and testing as in \cite{pei2020geom}. We report the average performance of all models on the test sets over $10$  splits\footnote{We obtain the performance of GAT, GCN, and GEOM-GCN-P(-S and -I) directly from \cite{pei2020geom}; and we reproduce other baseline models and implement all the filterbank GNNs.}. We tune the learning rate in \{0.01, 0.05, 0.1\}, weight decay in \{0, 5e-6, 1e-5, 5e-5, 1e-4, 5e-4, 1e-3, 5e-3, 1e-2\}, and dropout in \{0, 0.1, 0.2, \dots, 0.9\}.

\begin{table*}[htbp]
  \centering
  \tiny
  \caption{Supervised Learning of Shallow GNNs}
  \resizebox{\textwidth}{!}{
    \begin{tabular}{c|ccccccccc}
    \toprule
    \toprule
    Models/Datasets & Cornell & Wisconsin & Texas & Actor & Chameleon & Squirrel & Cora & CiteSeer & PubMed \\
    \midrule
    Diff of $S$-values & -0.021   & 0.004  & 0.055   & -0.065  & -0.243 & -0.205 & -0.574  & -0.449  & -0.330  \\
    \midrule
    \multicolumn{10}{c}{Spatial Methods(\%)} \\
    GraphSAINT & 70.27 & 71.35 & 72.97 & 17.89 & 43.86 & 33.27 & 84.69 & 73.2  & 89.42 \\
    FB-GraphSAINT & 78.38(8.11) & 80(8.65) & 75.68(2.71) & 19.08(1.19) & 46.05(2.19) & 36.06(2.79) & 87.5(2.81) & 74.76(1.56) & 89.88(0.46) \\
    GraphSAGE & 54.05 & 66    & 56.76 & 14.67 & 40.13 & 24.14 & 80.64 & 72.36 & 85.49 \\
    FB-GraphSAGE & 63.14(9.09) & 70(4) & 58.05(1.29) & 23.27(8.6) & 39.74(-0.39) & 24.6(0.46) & 83.7(3.06) & 72.58(0.22) & 86.31(0.82) \\
    \midrule
    \multicolumn{10}{c}{Spectral Methods(\%)} \\
    GAT   & 54.32 & 49.41 & 58.38 & 28.45 & 42.93 & 30.03 & 86.37 & 74.32 & 87.62 \\
    FB-GAT & 64.86(10.54) & 60.78(11.37) & 64.86(6.48) & 30.66(2.21) & 47.37(4.44) & 31.8(1.77) & \textbf{88.73(2.36)} & 77.12(2.8) & 88.16(0.54) \\
    GCN   & 52.7  & 45.88 & 52.16 & 26.86 & 28.18 & 23.96 & 85.77 & 73.68 & 88.13 \\
    FB-GCN & 62.16(9.46) & 56.86(10.98) & 62.16(10.00) & 31.21(4.35) & 32.89(4.71) & 24.73(0.77) & 85.92(0.15) & 75.24(1.56) & 88.54(0.41) \\
    Geom-GCN-P & 60.81 & 64.12 & 67.57 & \textbf{31.63} & 60.9  & 38.14 & 84.93 & 75.14 & 88.09 \\
    FB-Geom-GCN-P & 64.86(4.05) & 72.55(8.43) & 70.27(2.70) & 31.02(-0.61) & \textbf{67.20(6.30)} & \textbf{49.66(11.52)} & 85.17(0.24) & 76.23(1.09) & 88.25(0.16) \\
    Geom-GCN-S & 55.68 & 56.67 & 59.73 & 30.3  & 59.96 & 36.24 & 85.27 & 74.71 & 84.75 \\
    FB-Geom-GCN-S & 56.54(0.86) & 56.94(0.27) & 62.16(2.43) & 31.25(0.95) & 61.49(1.53) & 37.27(1.03) & 85.43(0.16) & 75.21(0.5) & 85.88(1.13) \\
    Geom-GCN-I & 56.76 & 58.24 & 57.58 & 29.09 & 60.31 & 33.32 & 85.19 & \textbf{77.99} & 90.05 \\
    FB-Geom-GCN-I & 57.38(0.62) & 60.68(2.44) & 62.21(4.63) & 31.45(2.36) & 60.76(0.45) & 35.27(1.95) & 85.45(0.26) & 77.69(-0.3) & \textbf{90.48(0.43)} \\
    GWNN  & 70.67 & 72.22 & 69.44 & 20.92 & 33.63 & 29.13 & 84.49 & 72.47 & 83.6 \\
    FB-GWNN & \textbf{80.11(9.44)} & \textbf{84.67(12.45)} & \textbf{77.78(8.34)} & 22.24(1.32) & 37.36(3.73) & 30.6(1.47) & 85.6(1.11) & 72.83(0.36) & 85.92(2.32) \\
    \midrule
     Baseline Average & 59.41 & 60.49 & 62.32 & 26.2  & 46.46 & 31.44 & 85.15 & 74.33 & 87.3 \\
    FB-Baseline Average & 65.93(6.52) & 67.81(\textbf{7.32}) & 66.65(4.33) & 27.52(1.32) & 49.11(2.65) & 33.75(2.31) & 85.94(1.27) & 75.21(0.97) & 87.93(0.63) \\
    \bottomrule
    \bottomrule
    \multicolumn{10}{p{66em}}{The results are averaged from $10$ independent runs. The (values) represent the difference of performance brought by patching FB.} \\
    \end{tabular}}
  \label{tab:supervised_learning_shallow_gnns}%
\end{table*}%

\begin{table*}[htbp]
  \centering
  \caption{Statistics of Datasets (measured by $\hat{L}_{\text{sym}}$ instead of $L_{\text{sym}}$) and Comparison of the Output Smoothness}
  \resizebox{\textwidth}{!}{
    \begin{tabular}{ccccccccccc}
    \toprule
    \toprule
    \multicolumn{2}{c}{datasets} & Cornell & Wisconsin & Texas & Actor & Chameleon & Squirrel & Cora & CiteSeer & PubMed \\
    \midrule
    \multirow{5}[4]{*}{S-values} & input feature & 0.172 & 0.385 & 0.205 & 0.567 & 0.831 & 0.87  & 0.617 & 0.515 & 0.529 \\
          & \textit{label} & \textit{0.139} & \textit{0.328} & \textit{0.301} & \textit{0.511} & \textit{0.638} & \textit{0.681} & \textit{0.188} & \textit{0.209} & \textit{0.272} \\
          & diff (label - feature) & \cellcolor[rgb]{ .867,  .922,  .969}-0.033 & \cellcolor[rgb]{ .867,  .922,  .969}-0.057 & \cellcolor[rgb]{ .988,  .894,  .839}0.096 & \cellcolor[rgb]{ .867,  .922,  .969}-0.056 & \cellcolor[rgb]{ .867,  .922,  .969}-0.193 & \cellcolor[rgb]{ .867,  .922,  .969}-0.189 & \cellcolor[rgb]{ .867,  .922,  .969}-0.429 & \cellcolor[rgb]{ .867,  .922,  .969}-0.306 & \cellcolor[rgb]{ .867,  .922,  .969}-0.257 \\
\cmidrule{2-11}          & \textit{GCN output} & \textit{0.037 (0.102)} & \textit{0.124 (0.204)} & \textit{0.139 (0.162)} & \textit{0.397 (0.114)} & \textit{0.595 (0.043)} & \textit{0.578 (0.103)} & \textit{0.156 (0.032)} & \textit{0.112 (0.097)} & \textit{0.234 (0.038)} \\
          & \textit{FB-GCN output} & \textit{\textbf{0.099 (0.040)}} & \textit{\textbf{0.269 (0.059)}} & \textit{\textbf{0.201 (0.100)}} & \textit{\textbf{0.531 (0.020)}} & \textbf{\textit{0.655 (0.017)}} & \textit{\textbf{0.683 (0.002)}} & \textit{\textbf{0.172 (0.016)}} & \textit{\textbf{0.148 (0.061)}} & \textit{\textbf{0.247 (0.025)}} \\
    \bottomrule
    \bottomrule
    \multicolumn{11}{p{77.335em}}{These results are obtained from $10$ independent runs. The stds are negligible so they are not presented (mostly $<0.002$).This table shows how FB- patched baseline could better reconstruct the label smoothness, \ie{} we want the S-value of the output to be closer to that of the labels. The (values) stand for the absolute difference between the S-values of the output of the methods and those of the ground truths. Better reconstruction between the two methods on each task is marked \textbf{bold}.} \\

    \end{tabular}%
    }
  \label{tab:Dataset_stats_training_smoothness}%
\end{table*}%

\begin{figure*}[htbp]
\centering
{
\subfloat[Low-Pass Channel]{
\captionsetup{justification = centering}
\includegraphics[width=0.32\textwidth]{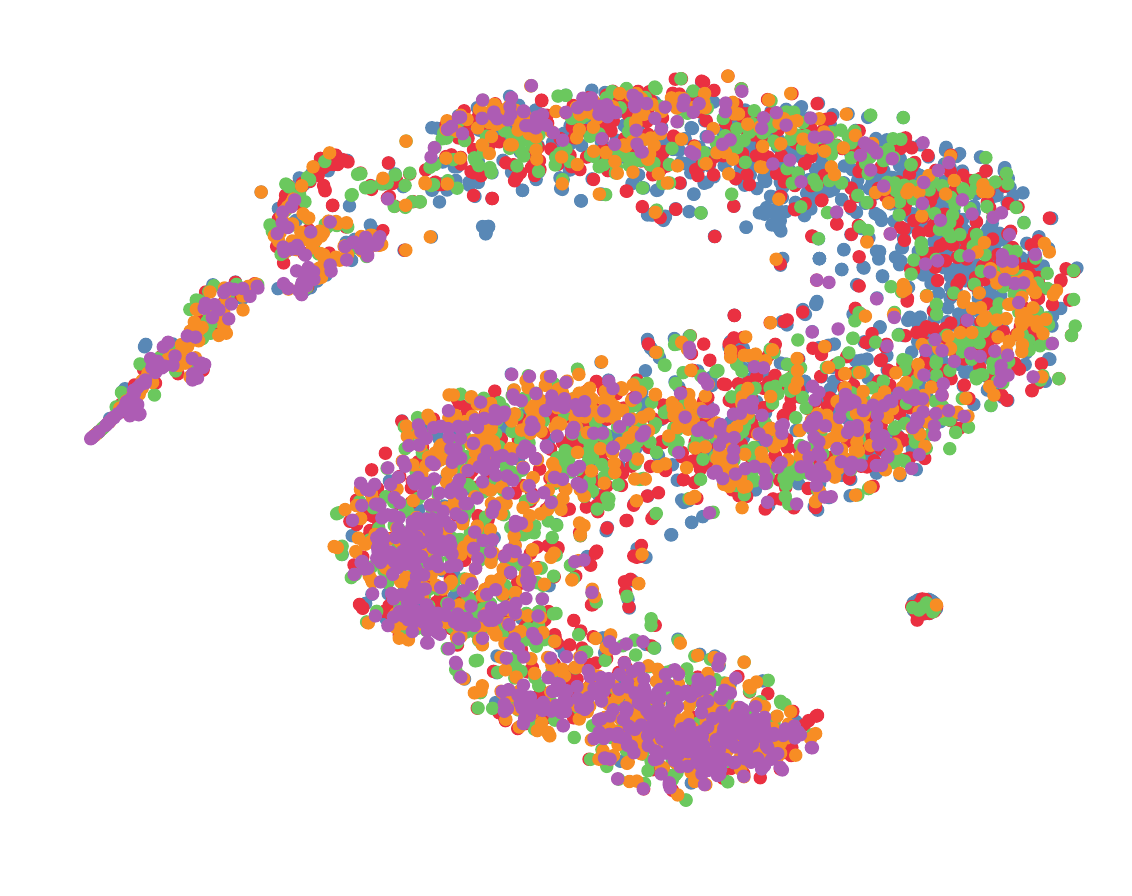}}
\subfloat[High-Pass Channel]{
\captionsetup{justification = centering}
\includegraphics[width=0.32\textwidth]{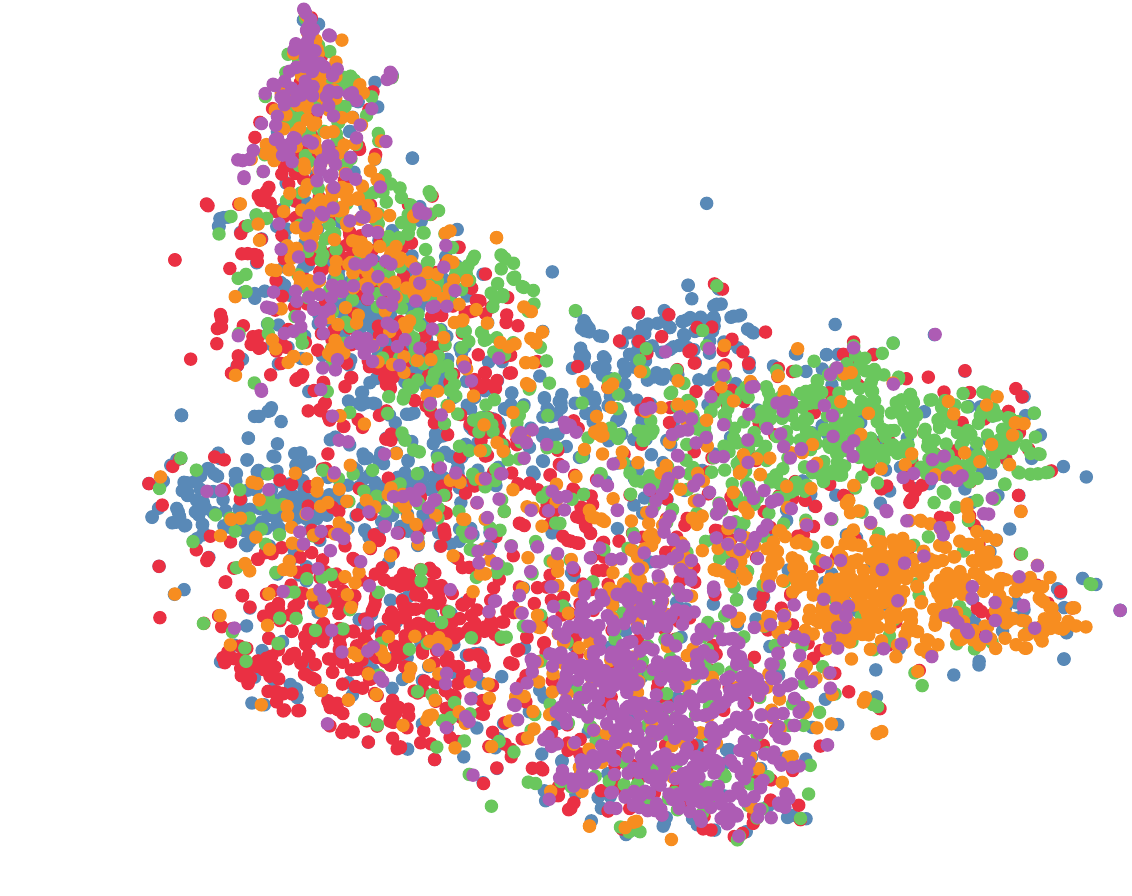}}
\subfloat[Two Channels Combined]{
\captionsetup{justification = centering}
\includegraphics[width=0.32\textwidth]{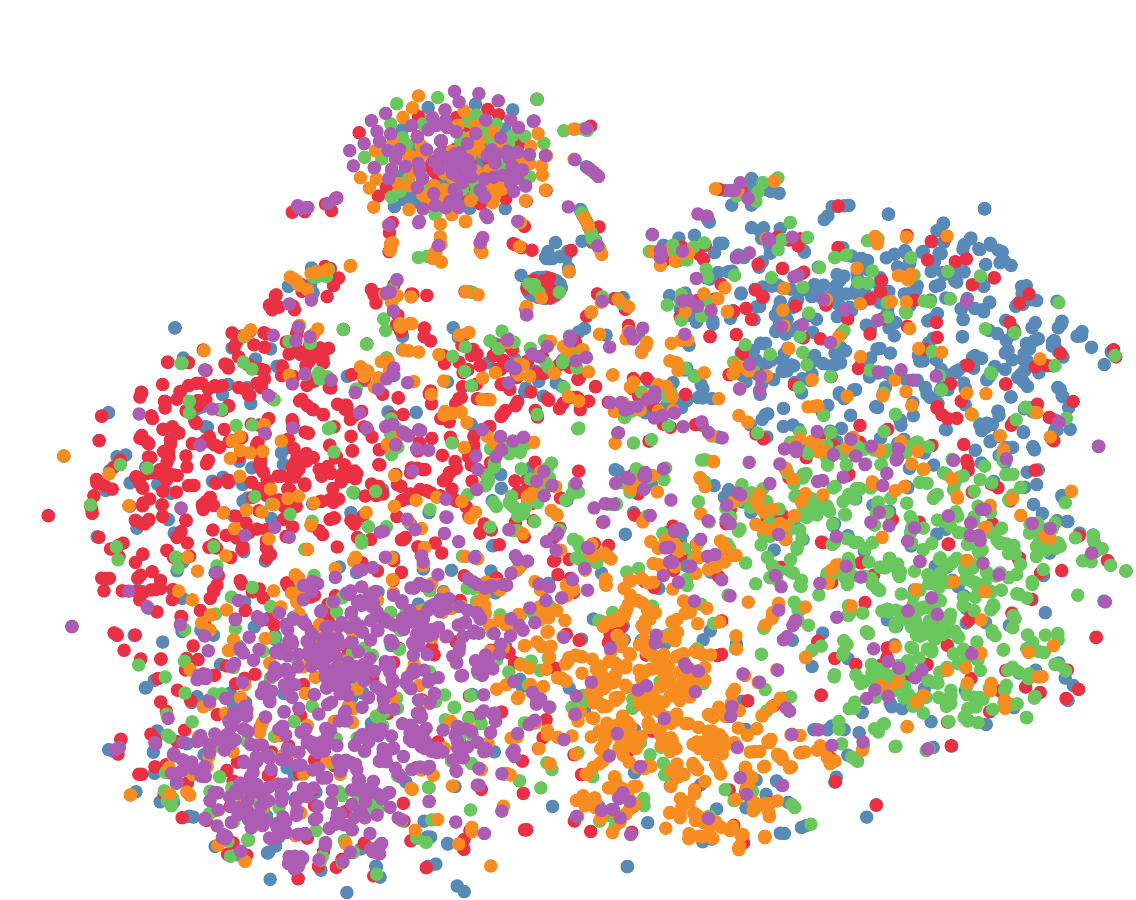}}
}
\caption{$t$-SNE Visualization of the Learned Node Embeddings in Different Channels of FB-GCN for Squirrel Dataset.} %
\label{fig:tsne_squirrel}
\end{figure*}

\subsection{Supervised Learning of Shallow GNNs}
In Table \ref{tab:supervised_learning_shallow_gnns}, we summarize the mean accuracy of shallow baseline GNNs and their filterbank versions. The best performance is highlighted. Also, we record the performance differences between baselines and the two-channel augmented methods in the brackets. For better intuitive understanding of the effectiveness of incorporating the high-pass filter, we present in Figure \ref{fig:tsne_squirrel} the t-SNE visualization of the learned embedding in different channels of FB-GCN for Squirrel dataset. Those of the other datasets will be provided in the appendix \ref{appendix:more_tsne}.

From the results we can see that, our proposed method generally boost the performance of almost all cases, especially when the labels are not much smoother than the input features indicated, considering the $S$-values in Table \ref{tab:Dataset_stats_training_smoothness}. 

Table \ref{tab:Dataset_stats_training_smoothness} shows that the $S$-values of FB-GCN outputs are closer to the $S$-values of the ground truth labels (see the absolute differences in the bracket) compared with those of GCN outputs. This indicates that FB-GCN is able to learn better representations which can reconstruct both the smooth and non-smooth part of the ground truth labels. Note that we measure the smoothness by $\hat{L}_\text{sym}$ instead of ${L}_\text{sym}$, because GCN is train with renormalized affinity matrix $\hat{A}_\text{sym}$.

\begin{table*}[htbp]
  \centering
  \caption{Supervised Learning of Deep Multi-scale GNNs}
  \resizebox{\textwidth}{!}{
    \begin{tabular}{c|ccccccccc}
    \toprule
    \toprule
    Models\textbackslash{}Datasets & Cornell & Wisconsin & Texas & Actor & Chameleon & Squirrel & Cora & CiteSeer & PubMed \\
    \midrule
    GCN\rom{2}-8 & 70.54 & 73.88 & 71.08 & 33.7  & 60.61 & 37.49 & 85.69 & 75.54 & 88.62 \\
    FB-GCN\rom{2}-8 & 75.95(5.41) & 82.35(8.47) & 74.59(3.51) & 35.37(1.67) & 60.43(-0.18) & 39.69(2.2) & 86.04(0.35) & 75.51(-0.03) & 89.97(1.35) \\
    GCN\rom{2}-16 & 74.86 & 74.12 & 69.46 & 33.62 & 55.48 & 35.98 & 87.3  & 76.54 & 88.28 \\
    FB-GCN\rom{2}-16 & \textbf{77.57(2.71)} & 82.55(8.43) & 77.03(7.57) & 35.12(1.5) & 56.78(1.3) & 39.38(3.4) & 87.5(0.2) & 76.67(0.13) & 89.39(1.11) \\
    GCN\rom{2}-32 & 72.7  & 70.2  & 69.46 & 31.61 & 53.71 & 35.92 & 88.13 & 76.08 & 87.89 \\
    FB-GCN\rom{2}-32 & 72.81(0.11) & 78.63(8.43) & 80.27(10.81) & 32.99(1.38) & 54.98(1.27) & 36.81(0.89) & 88.33(0.2) & 76.49(0.41) & 89.1(1.21) \\
    GCN\rom{2}-64 & 71.89 & 68.84 & 66.49 & 28.76 & 54.14 & 36.1  & \textbf{88.49} & 77.08 & 89.57 \\
    FB-GCN\rom{2}-64 & 76.49(4.6) & 76.27(7.43) & 76.22(9.73) & 29.57(0.81) & 54.39(0.25) & 36.79(0.69) & 87.92(-0.57) & 77(-0.08) & 89.65(0.08) \\
    GCN\rom{2}*-8 & 72.97 & 78.82 & 72.7  & 34.89 & 62.48 & 40.72 & 86.14 & 75.06 & 89.7 \\
    FB-GCN\rom{2}*-8 & 76.76(3.79) & \textbf{82.94(4.12)} & 78.11(5.41) & \textbf{35.87(0.98)} & \textbf{65.11(2.63)} & 41.19(0.47) & 86.94(0.8) & 76.32(1.26) & 90.2(0.5) \\
    GCN\rom{2}*-16 & 76.49 & 81.57 & 75.41 & 34.18 & 58.86 & 39.88 & 87.46 & 75.8  & 86.69 \\
    FB-GCN\rom{2}*-16 & 76.95(0.46) & 82.39(0.82) & 76.76(1.35) & 35.4(1.22) & 59.98(1.12) & 40.08(0.2) & 87.48(0.02) & 76.43(0.63) & 89.95(3.26) \\
    GCN\rom{2}*-32 & 74.32 & 77.06 & 77.84 & 33.78 & 56.27 & 37.69 & 88.35 & 76.55 & 89.37 \\
    FB-GCN\rom{2}*-32 & 74.51(0.19) & 80.78(3.72) & \textbf{84.86(7.02)} & 34.73(0.95) & 57.65(1.38) & \textbf{41.24(3.55)} & 88.16(-0.19) & 76.89(0.34) & 89.92(0.55) \\
    GCN\rom{2}*-64 & 72.43 & 73.53 & 75.41 & 32.72 & 53.82 & 36.83 & 88.01 & \textbf{77.13} & \textbf{90.3} \\
    FB-GCN\rom{2}*-64 & 75.84(3.41) & 81.57(8.04) & 80.54(5.13) & 34.89(2.17) & 57.52(3.7) & 39.81(2.98) & 87.44(-0.57) & 77.03(-0.1) & 89.98(-0.32) \\
    \midrule
    Baseline Average & 73.28 & 74.75 & 72.23 & 32.91 & 56.92 & 37.58 & 87.45 & 76.22 & 88.8 \\
    FB-Baseline Average & 75.86(2.58) & 80.94(6.19) & 78.55(\textbf{{6.32}}) & 34.24(1.33) & 58.36(1.44) & 39.37(1.79) & 87.48(0.03) & 76.54(0.32) & 89.77(0.97) \\
    \bottomrule
    \bottomrule
    \multicolumn{10}{p{66em}}{The results are averaged from $10$ independent runs. The (values) represent the difference of performance brought by patching FB.} \\
    \end{tabular}}
  \label{tab:supervised_learning_deep_gnns}%
\end{table*}%

\subsection{Supervised Learning of Deep GNNs}
In this subsection, we build deep multi-hop filterbank models based on the architecture of GCN\rom{2} and GCN\rom{2}* \cite{chen2020simple} to see if the two-channel method is capable to assist deep GNN models. We report mean accuracy, highlight best performing depth, and record performance difference in brackets in Table \ref{tab:supervised_learning_deep_gnns}. In general, FB-GCN\rom{2} and FB-GCN\rom{2}* achieve better results than the unpatched GCN\rom{2} and GCN\rom{2}* at different depths, especially on \textit{Wisconsin} and \textit{Texas}, where the non-smooth part of representations are desirable.

\begin{table*}[htbp]
  \centering
  \setlength{\tabcolsep}{3pt}
  \caption{Ablation Results: Accuracy (\%)}
    \begin{tabular}{cccccccc}
    \toprule
    \toprule
    \multirow{2}[2]{*}{\#Channels} & \multirow{2}[2]{*}{Transformation} & \multicolumn{2}{c}{Cora} & \multicolumn{2}{c}{Cornell} &  \multicolumn{2}{c}{Texas} \\
          &       & Mean & Std   & Mean & Std   & Mean & Std \\
    \midrule
    1     & linear & \cellcolor[rgb]{ .973,  .412,  .42}83.92 & 1.0   & \cellcolor[rgb]{ .973,  .412,  .42}64.86 & 2.2   &  \cellcolor[rgb]{ .973,  .412,  .42}70.60 & 1.9 \\
    1     & nonlinear & \cellcolor[rgb]{ .992,  .831,  .498}84.69 & 0.5   & \cellcolor[rgb]{ .988,  .749,  .482}70.27 & 0.8   &  \cellcolor[rgb]{ .992,  .82,  .498}72.97 & 0.8 \\
    2     & linear & \cellcolor[rgb]{ .965,  .914,  .518}85.02 & 2.1   & \cellcolor[rgb]{ .698,  .835,  .502}75.64 & 2.0   &  \cellcolor[rgb]{ .831,  .875,  .51}74.15 & 2.1 \\
    \textit{\textbf{2}} & \textit{\textbf{nonlinear}} & \cellcolor[rgb]{ .388,  .745,  .482}\textit{\textbf{87.50}} & \textit{1.6} & \cellcolor[rgb]{ .388,  .745,  .482}\textit{\textbf{78.38}} & \textit{1.5} &  \cellcolor[rgb]{ .388,  .745,  .482}\textit{\textbf{75.68}} & \textit{1.0} \\
    \bottomrule
    \bottomrule
    \multicolumn{8}{p{26em}}{\small Color indicators are added to differentiate the performance of each test case: the greener the better, the redder the worse.} \\
    \end{tabular}%
  \label{tab:ablations}%
\end{table*}%

\subsection{Ablation Tests}\label{sec:ablation}
In this subsection, we perform ablation tests by using FB-GraphSAINT \cite{zeng2019graphsaint} on \textit{Cora}, \textit{Cornell} and \textit{Texas} accordingly, which are the three principally different datasets measured by $\hat{L}_{\text{sym}}$  in Table \ref{tab:Dataset_stats_training_smoothness}
. The ablation tests would examine the effectiveness of each proposed component. The results are summarized in Table \ref{tab:ablations}. The results show that both the non-linear feature extractor and two-channel filtering architecture are able to help capture richer information under different $S$-value distributions of input features and output labels.(See more ablation results in Appendix \ref{appendix:ablation_ppi}) 

Moreover, to emphasize the importance of HP component, we also test the learnable coefficients $\alpha_L$ and $\alpha_H$ for two components on FB-GraphSAINT over the 9 datasets at the validation stage (see Table \ref{tab:ablation_coefficients} in Appendix \ref{appendix:ablation_coefficients}). For most of the tasks, neither the coefficients for LP nor HP are negligible. Among $6$ out of $9$ tasks, the learned coefficients for the HP components are even greater, this indicates the necessity of the HP components in graph representation learning.

In addition, from the averaged real-time change of the learned coefficients $\alpha_L$ and $\alpha_H$ in the output layer of FB-GraphSAINT on \textit{Cora}, \textit{Cornell} and  \textit{Texas} during training (see Figure \ref{fig:alphas_saint} in Appendix \ref{appendix:change_of_coefficients}), we can see that $\alpha_L$ and $\alpha_H$ will converge to a pair of values that explains how the smooth and non-smooth features will be mixed. The fact that the ratio $\alpha_H/\alpha_L$ is close to $1$ again shows that the importance of the non-smooth part in constructing the output signal. More specifically, the importance (red line) is higher when the demand of non-smooth outputs (diff values) is higher. See more ablation study of GCN on PPI in Appendix \ref{appendix:ablation_ppi}.

\section{Conclusion}
\label{sec:discussion}

This paper recognizes the role of high-frequency information in graph representation learning. The proposed HP filter completes the spectrum of graph filters and yield significantly better representations on several empirical tasks. The importance of the non-smooth component in graph signals can be revealed by the new defined S-value.

\clearpage
{
\bibliographystyle{abbrv}
\bibliography{references}
}

\clearpage

\appendix

\section{Dataset Descriptions}

For node classification. there are $4$ main categories: 

\textit{Cora}, \textit{CiteSeer}, and \textit{PubMed} are $3$ benchmark datasets \cite{sen:aimag08} in the category of \textit{Citation network}. Such networks use nodes to represent papers and edges to denote citations. Node features are the bag-of-words representation and node labels are classified into different academic topics.

\textit{Cornell}, \textit{Texas}, and \textit{Wisconsin} belong to the webpage dataset \textit{WebKB} \cite{pei2020geom} created by Carnegie Mellon University. Each node represents a web page, and the edges are hyperlinks between nodes. Node features are the bag-of-words representation and node labels are in five classes.

\textit{Chameleon} and \textit{Squirrel} are twi page-to-page networks in the \textit{Wikipedia network} \cite{rozemberczki2019multiscale}. Nodes represent web pages and edges show mutual links between pages. Node features are informative nouns in the Wikipedia pages and nodes are classified into $5$ groups based on monthly views.

\textit{Actor} refers to the \textit{Actor co-occurrence network}. A node correspond to an actor, and an edge exists if two actors occur on the same Wikipedia page. Node features correspond to some keywords in the Wikipedia pages, and nodes are categorized into five classes of words of actors actor's Wikipedia.

\section{Graph Classification}
\label{appendix:graph_classification}

For the graph classification tasks, we compare the patched methods FB-GIN-$0$ and FB-GIN-$\epsilon$ against the baselines GIN-$0$ and GIN-$\epsilon$, with the same experiment setting as \cite{xu2018powerful}. The results (accuracy and standard deviation) are provided in Table \ref{tab:graph_classification}.

\begin{table*}[htbp]
  \centering
  \small
  \caption{Results and Hyperparameters of Graph Classification}
    \begin{tabular}{cccccccccc}
    \toprule
    \toprule
    Task  & Method & lr    & weight decay & gamma & width & batch size & dropout & concat & Acc \\
    \midrule
    \multirow{4}[2]{*}{MUTAG} & GIN-0 &       &       &       &       &       &       &       & 89.4	 \\
          & FB-GIN-0 & 0.036104 & 0.0001034 & -0.48239 & 128   & 32    & 0.75127 & 0     & \textbf{91.4035} \\
          & GIN-eps &       &       &       &       &       &       &       & 89	 \\
          & FB-GIN-eps & 0.003584 & 0.011275 & ~     & 64    & 32    & 0.75859 & 0     & \textbf{94.74} \\
    \midrule
    \multirow{4}[2]{*}{PROTEINS} & GIN-0 &       &       &       &       &       &       &       & 76.2	 \\
          & FB-GIN-0 & 0.047597 & 0.00042991 & 1.269 & 8     & 32    & 0.064654 & 1     & \textbf{80.784} \\
          & GIN-eps &       &       &       &       &       &       &       & 75.9	 \\
          & FB-GIN-eps & 0.006173 & 0.02751 & ~     & 8     & 128   & 0.26964 & 0     & \textbf{79.4375} \\
    \midrule
    \multirow{4}[2]{*}{PTC} & GIN-0 &       &       &       &       &       &       &       & 64.6 \\
          & FB-GIN-0 & 0.0083924 & 0.0059482 & 0.72171 & 16    & 32    & 0.082378 & 0     & \textbf{68.578} \\
          & GIN-eps &       &       &       &       &       &       &       & 63.7 \\
          & FB-GIN-eps & 0.049951 & 0.00029225 & ~     & 16    & 128   & 0.30306 & 0     & \textbf{71.429} \\
    \midrule
    \multirow{4}[2]{*}{NCI1} & GIN-0 &       &       &       &       &       &       &       & 	82.7 \\
          & FB-GIN-0 & 0.00039327 & 0.01014 & 0.95777 & 128   & 128   & 0.01526 & 1     & \textbf{84.428} \\
          & GIN-eps &       &       &       &       &       &       &       & 	82.7 \\
          & FB-GIN-eps & 9.94E-05 & 0.0083156 & ~     & 128   & 128   & 0.70224 & 1     & \textbf{84.123} \\
    \midrule
    \multirow{4}[2]{*}{IMDB-B} & GIN-0 &       &       &       &       &       &       &       & 75.1	 \\
          & FB-GIN-0 & 0.010815 & 0.00024241 & 0.83553 & 128   & 128   & 0.97456 & 1     & \textbf{83} \\
          & GIN-eps &       &       &       &       &       &       &       & 74.3 \\
          & FB-GIN-eps & 0.015596 & 0.0047105 & ~     & 32    & 32    & 0.80636 & 1     & \textbf{78.111} \\
    \midrule
    \multirow{4}[2]{*}{IMDB-M} & GIN-0 &       &       &       &       &       &       &       & 52.3	 \\
          & FB-GIN-0 & 0.00067325 & 0.0042346 & 1.4691 & 64    & 128   & 0.80828 & 1     & \textbf{53.467} \\
          & GIN-eps &       &       &       &       &       &       &       & 	52.1 \\
          & FB-GIN-eps & 0.00061908 & 0.037266 & ~     & 64    & 128   & 0.92727 & 1     & \textbf{53.259} \\
    \midrule
    \multirow{4}[2]{*}{RDT-B} & GIN-0 &       &       &       &       &       &       &       & 92.4 \\
          & FB-GIN-0 & 0.01262 & 0.047278 & -0.41963 & 8     & 128   & 0.48795 & 0     & \textbf{94} \\
          & GIN-eps &       &       &       &       &       &       &       & 	92.2 \\
          & FB-GIN-eps & 0.0068918 & 0.016003 & ~     & 128   & 128   & 0.4131 & 0     & \textbf{93} \\
    \midrule
    \multirow{4}[2]{*}{RDT-M5K} & GIN-0 &       &       &       &       &       &       &       & 	57.5	 \\
          & FB-GIN-0 & 0.0011204 & 0.017434 & -0.02748 & 8     & 128   & 0.35465 & 1     & \textbf{65.6} \\
          & GIN-eps &       &       &       &       &       &       &       & 	57	 \\
          & FB-GIN-eps & 0.0026491 & 0.0492 & ~     & 8     & 128   & 0.55127 & 1     & \textbf{68.4} \\
    \midrule
    \multirow{4}[2]{*}{COLLAB} & GIN-0 &       &       &       &       &       &       &       & 80.2 \\
          & FB-GIN-0 & 2.88E-04 & 0.047982 & -0.3438 & 128   & 128   & 0.66614 & 1     & \textbf{86.3} \\
          & GIN-eps &       &       &       &       &       &       &       & 80.1 \\
          & FB-GIN-eps & 0.00019472 & 0.00031991 & ~     & 128   & 128   & 0.1088 & 1     & \textbf{85} \\
    \bottomrule
    \end{tabular}%
  \label{tab:graph_classification}%
\end{table*}%

\section{More Ablation Tests}
\subsection{Ablation Coefficients}
\label{appendix:ablation_coefficients}
\begin{table*}[htbp]
  \centering
  \small
  \caption{$\alpha_L$ and $\alpha_H$ in the Output Layer of FB-GraphSAINT}
  \resizebox{\textwidth}{!}{
    \begin{tabular}{cccccccccc}
    \toprule
    \toprule
          & Cornell & Wisconsin & Texas & Actor  & Chameleon & Squirrel & Cora & CiteSeer & PubMed \\
    \midrule
    $\alpha_L$ & 0.436 & 0.441 & 0.57 & 0.54 & 0.701 & 0.675  & 0.509 & 0.514 & 0.473 \\
    $\alpha_H$ & \textbf{0.45} & \textbf{0.499} & \textbf{0.6} & \textbf{0.557} & \textbf{0.713} & 0.65 & 0.464 & 0.503 & \textbf{0.478} \\
    \midrule
    $\alpha_H / \alpha_L$  & \textbf{1.032} & \textbf{1.132} & \textbf{1.053} & \textbf{1.031} & \textbf{1.017} & 0.963 & 0.912 & 0.979 & \textbf{1.011} \\
    \bottomrule
    \bottomrule
    \multicolumn{10}{p{50.33em}}{The results are averaged from $10$ independent runs. If the ratio is higher than 1.0, then the high frequency signals are more important. The higher the ratio, the more important HP filter is.} \\

    \end{tabular}%
    }
  \label{tab:ablation_coefficients}%
\end{table*}%
\subsection{Real Time Change of Coefficients}
\label{appendix:change_of_coefficients}
\begin{figure*}[htbp]
\centering
{
\subfloat[Cora (diff = -0.429)]{
\captionsetup{justification = centering}
\includegraphics[width=0.32\textwidth]{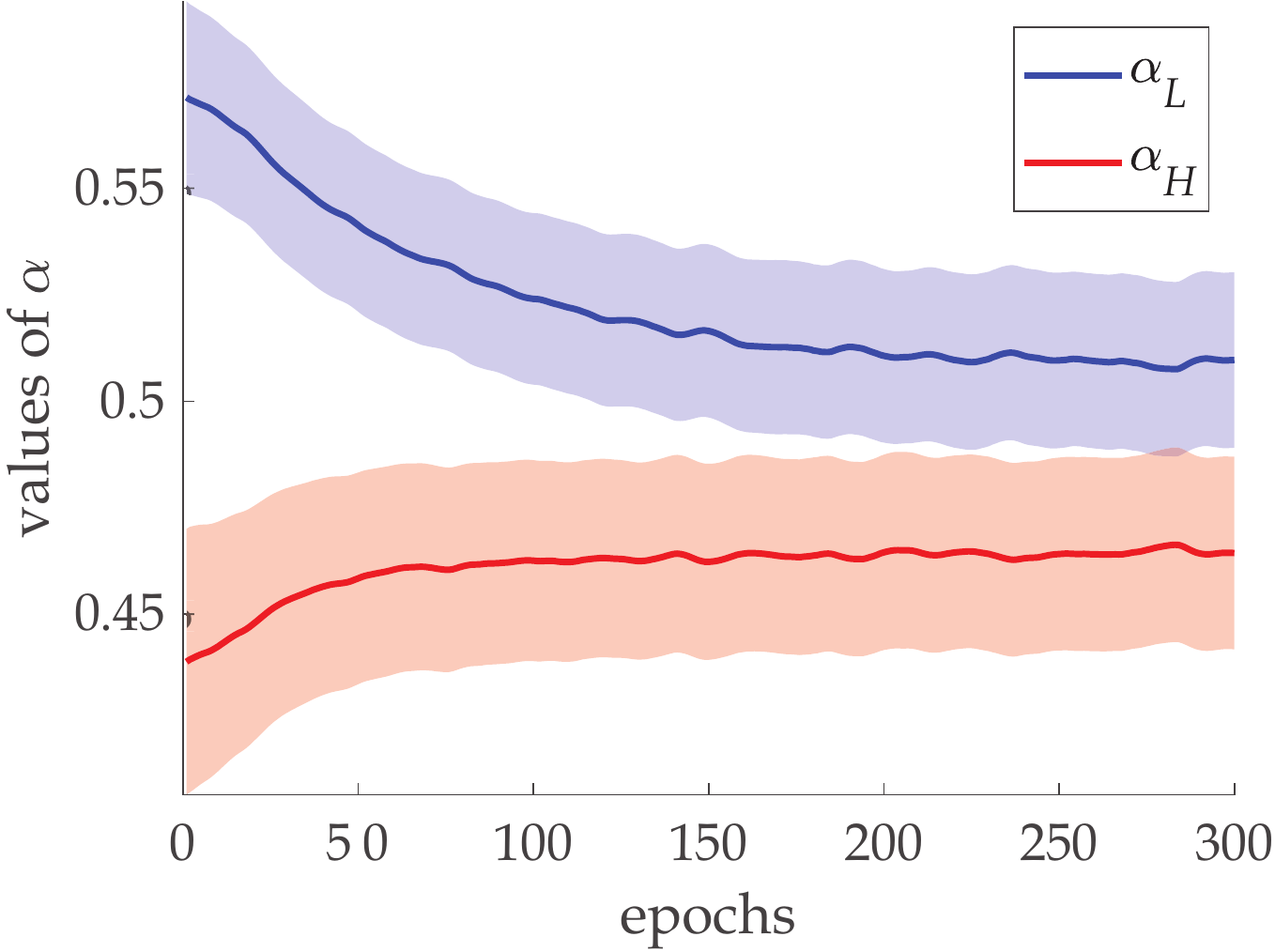}}
\subfloat[Cornell (diff = -0.033)]{
\captionsetup{justification = centering}
\includegraphics[width=0.32\textwidth]{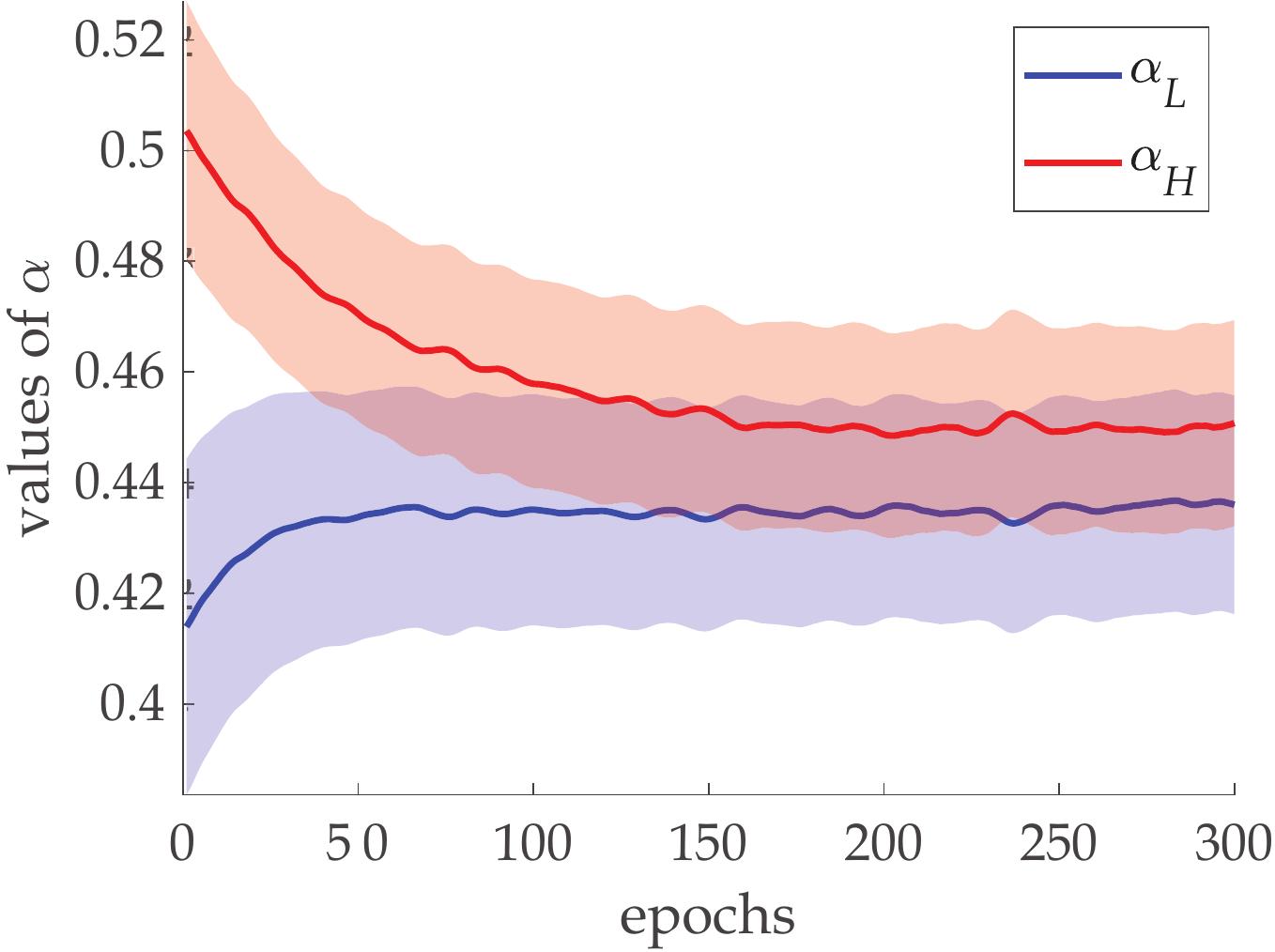}}
\subfloat[Texas (diff = 0.096)]{
\captionsetup{justification = centering}
\includegraphics[width=0.32\textwidth]{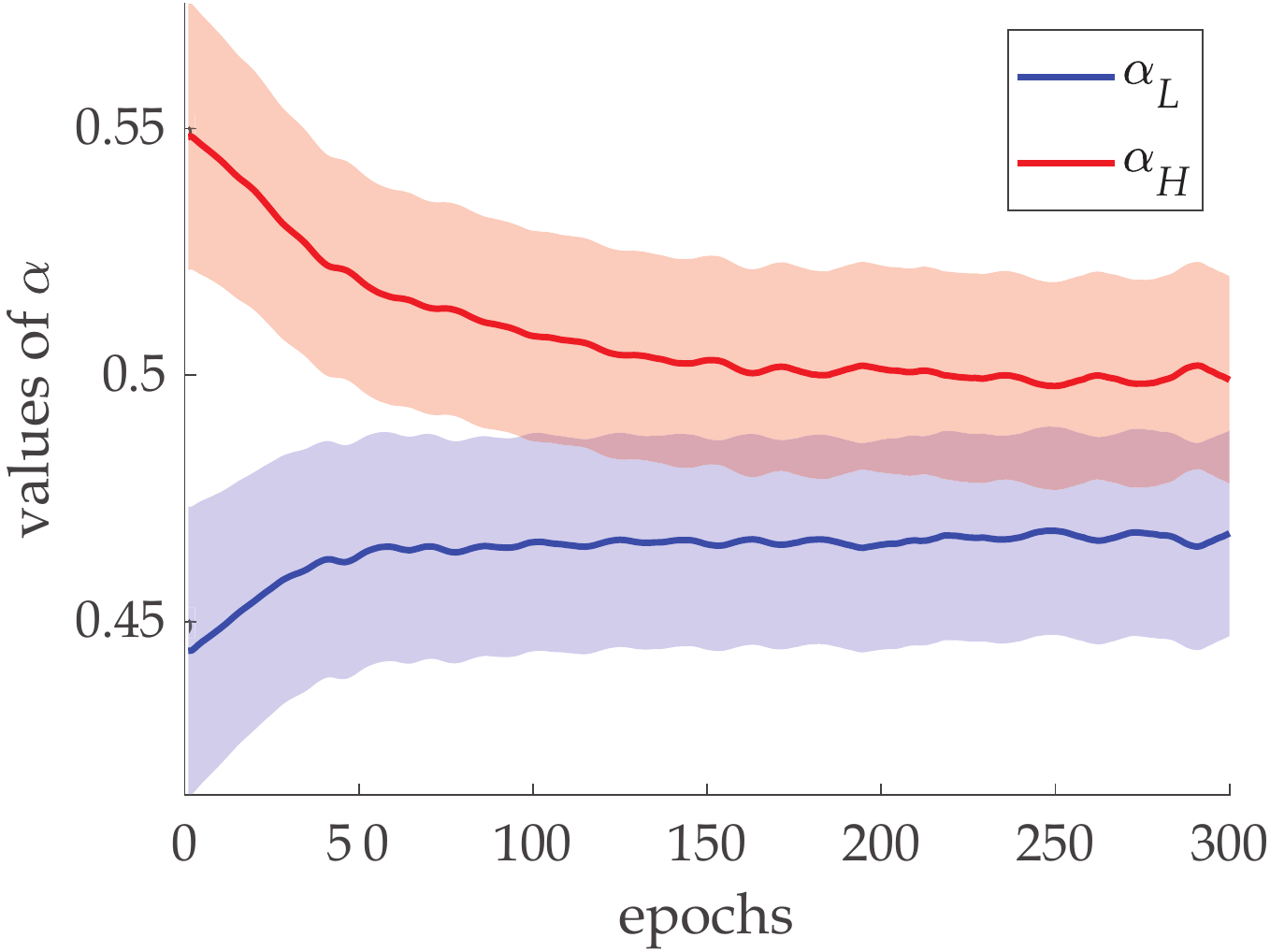}}
}
\caption{$\alpha_L$ and $\alpha_H$ in the output layer of FB-GraphSAINT trained on Cora, Cornell and Texas. The mean curves and the std bands are obtained over $20$ independent runs. See diff values in table .} 
\label{fig:alphas_saint}
\end{figure*}

\subsection{Ablation Tests on PPI}
\label{appendix:ablation_ppi}
\begin{table}[htbp]
  \centering
  \caption{Ablation Tests on PPI}
    \begin{tabular}{c|c|c|c}
    \toprule
    \toprule
    \#Channels & Transformation & F1-score & Std \\
    \midrule
    1     & linear & 59.4  & 0.8 \\
    \midrule
    1     & nonlinear & 69.5  & 0.3 \\
    \midrule
    2     & linear & 71.8  & 0.6 \\
    \midrule
    2     & nonlinear & 73.9  & 0.4 \\
    \bottomrule
    \bottomrule
    \end{tabular}%
  \label{tab:ablation_ppi}%
\end{table}%

\section{t-SNE Visualization}
\label{appendix:more_tsne}
\clearpage
\begin{figure*}[htbp]
\centering
{
\subfloat[Actor: LP Channel]{
\captionsetup{justification = centering}
\includegraphics[width=0.3\textwidth]{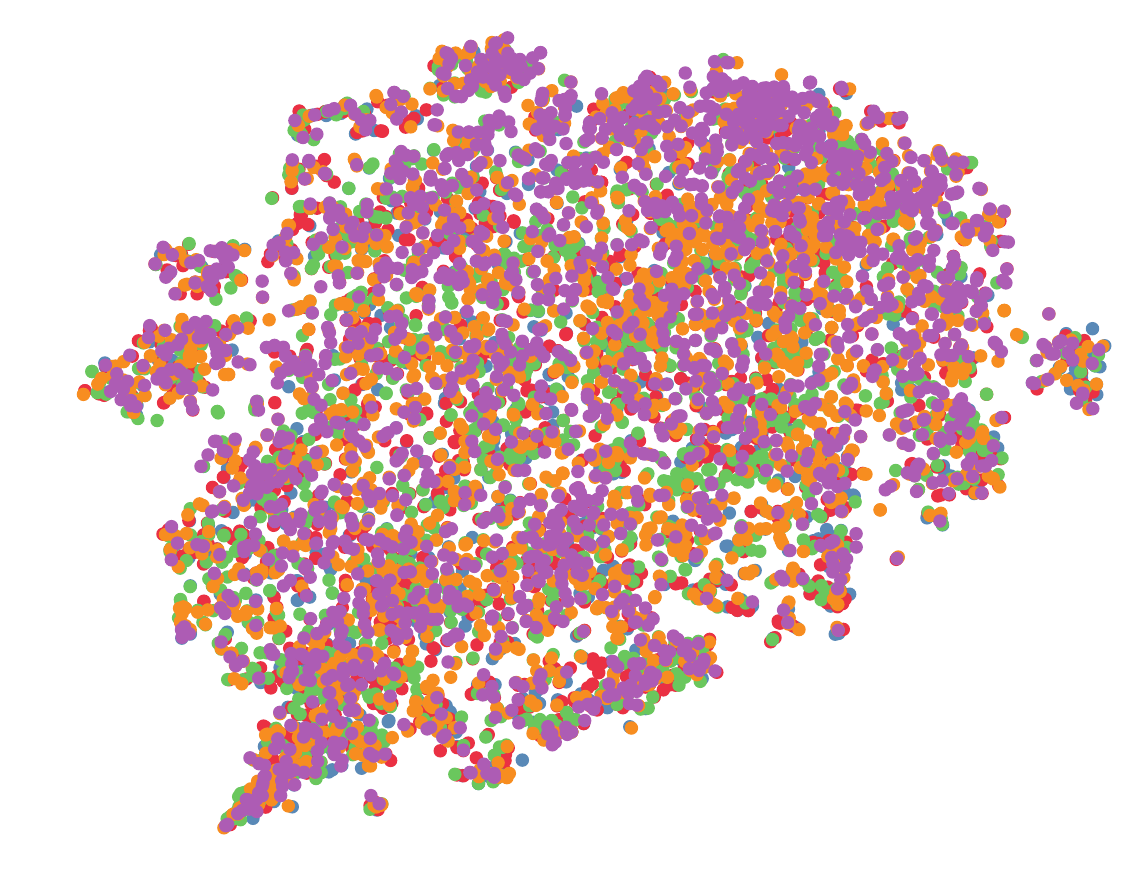}}
\hfill
\subfloat[Actor: HP Channel]{
\captionsetup{justification = centering}
\includegraphics[width=0.3\textwidth]{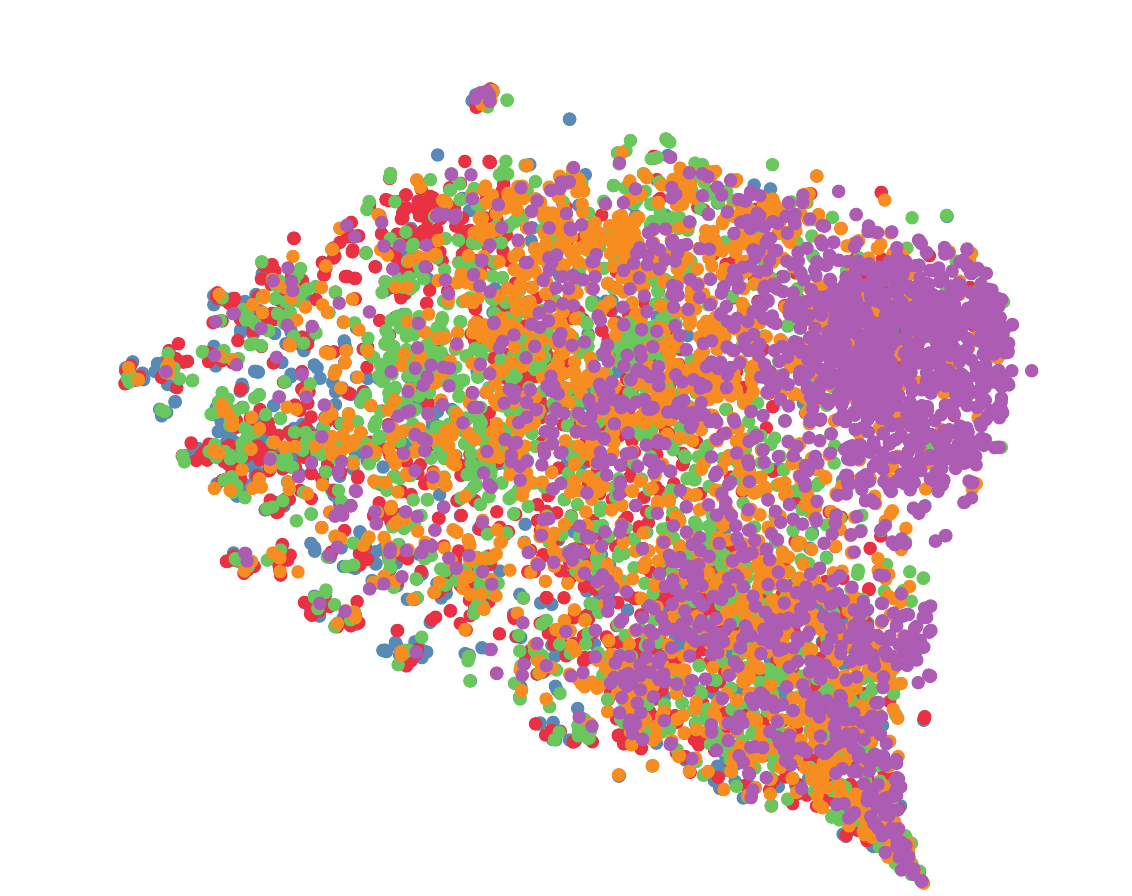}}
\hfill
\subfloat[Actor: Two Channels]{
\captionsetup{justification = centering}
\includegraphics[width=0.3\textwidth]{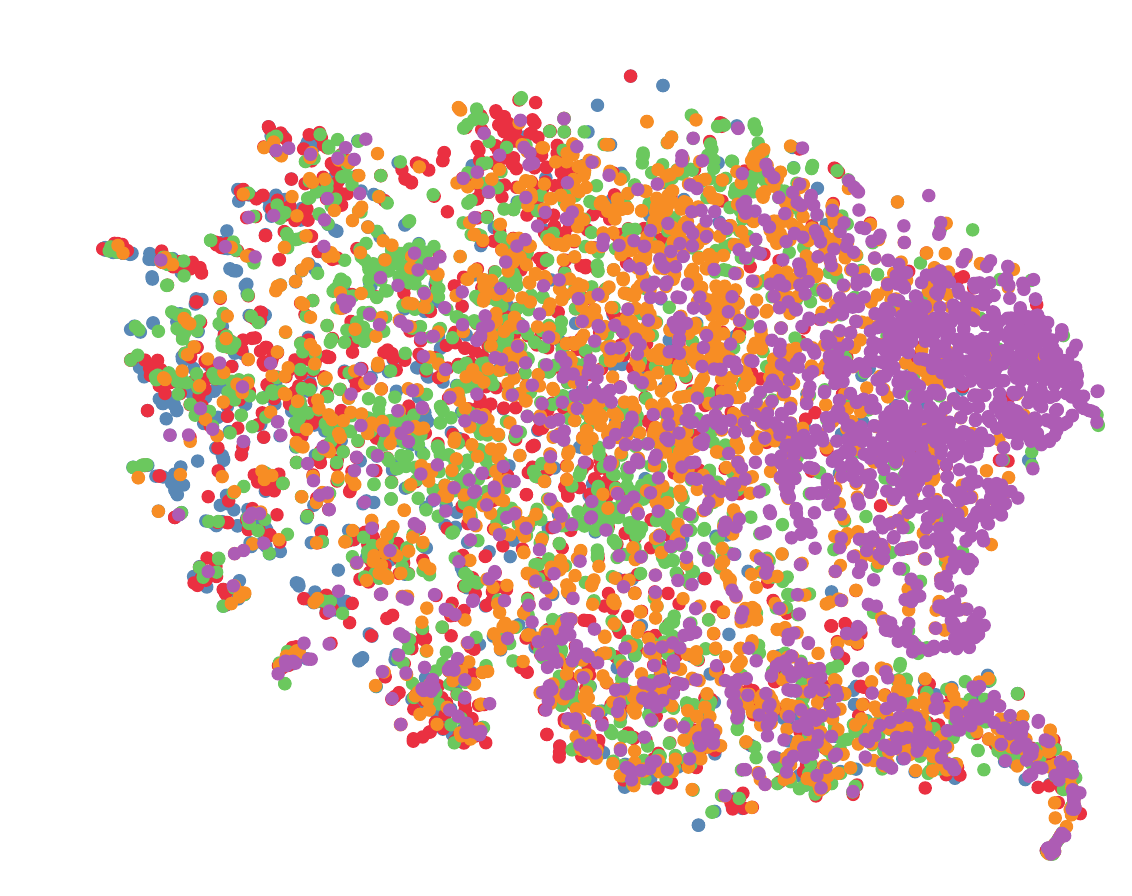}}
\\
\subfloat[Chameleon: LP Channel]{
\captionsetup{justification = centering}
\includegraphics[width=0.3\textwidth]{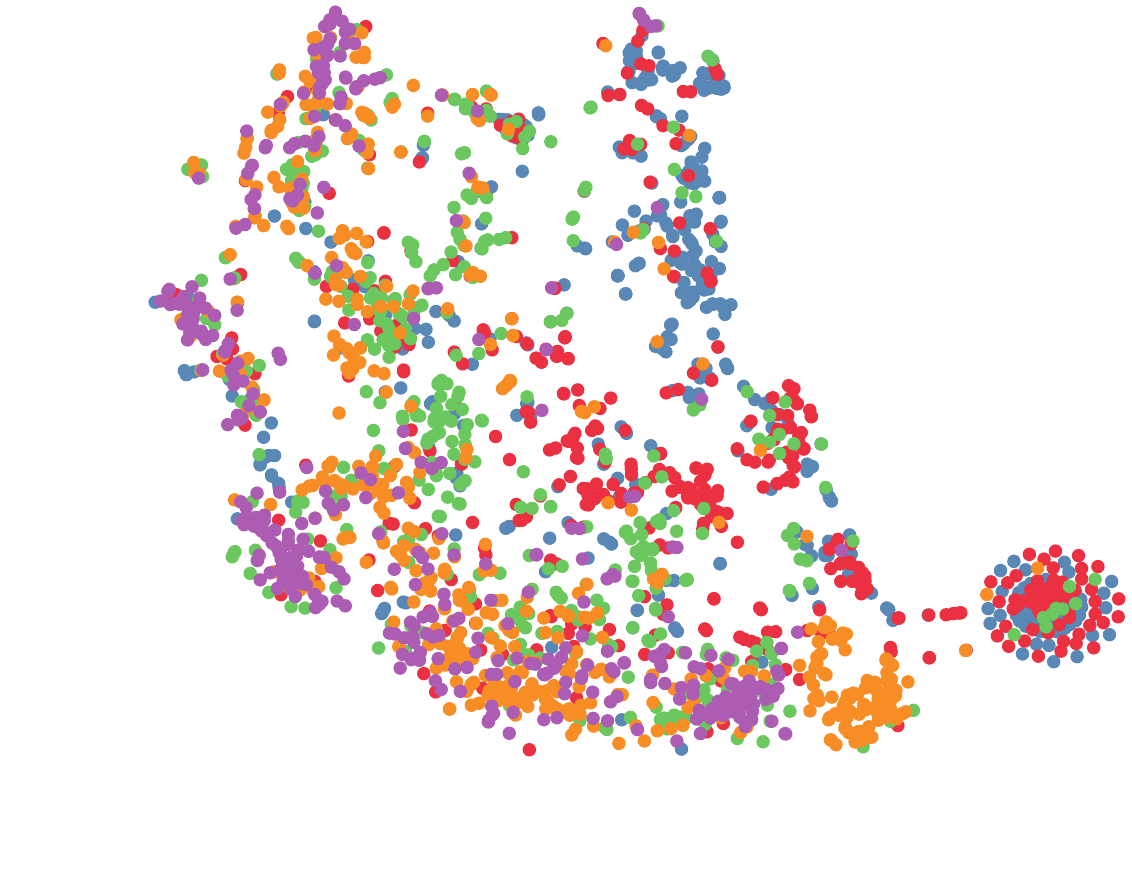}}
\hfill
\subfloat[Chameleon: HP Channel]{
\captionsetup{justification = centering}
\includegraphics[width=0.3\textwidth]{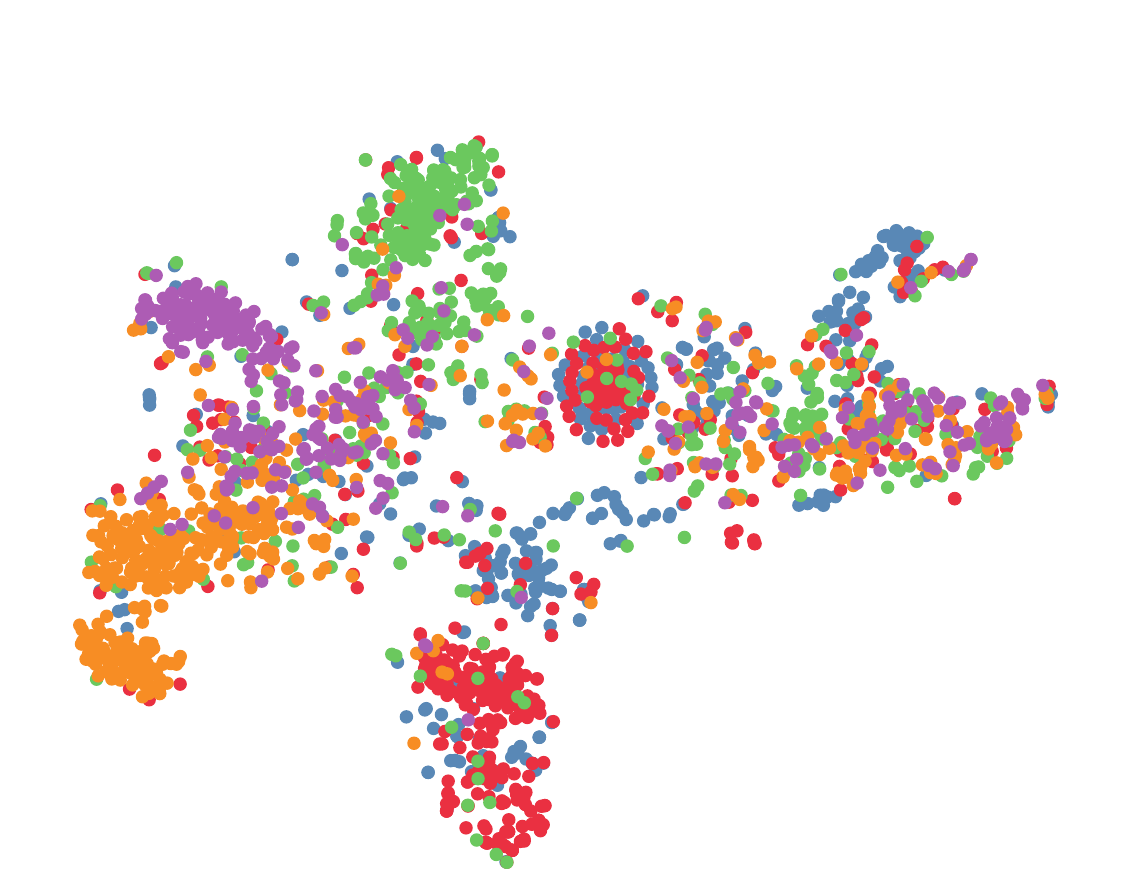}}
\hfill
\subfloat[Chameleon: Two Channels]{
\captionsetup{justification = centering}
\includegraphics[width=0.3\textwidth]{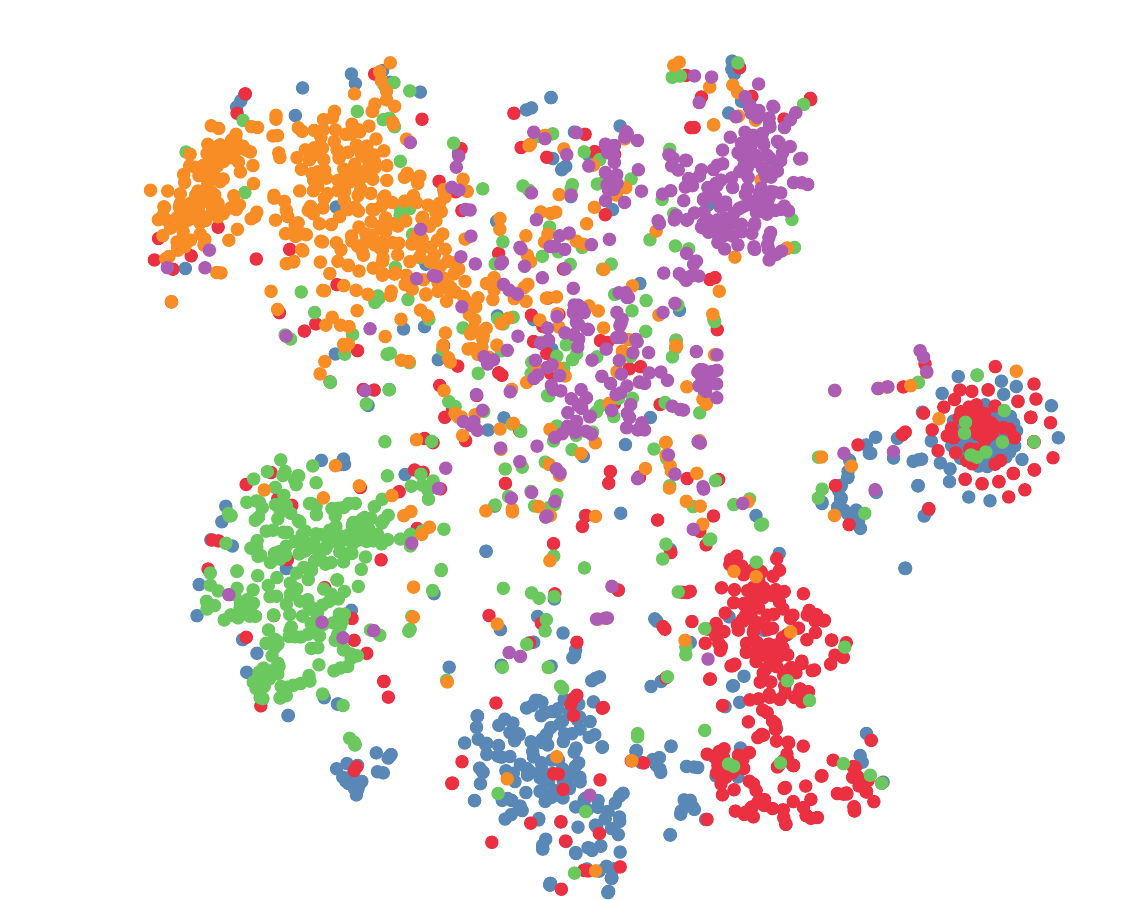}}
\\
\subfloat[Cornell: LP Channel]{
\captionsetup{justification = centering}
\includegraphics[width=0.3\textwidth]{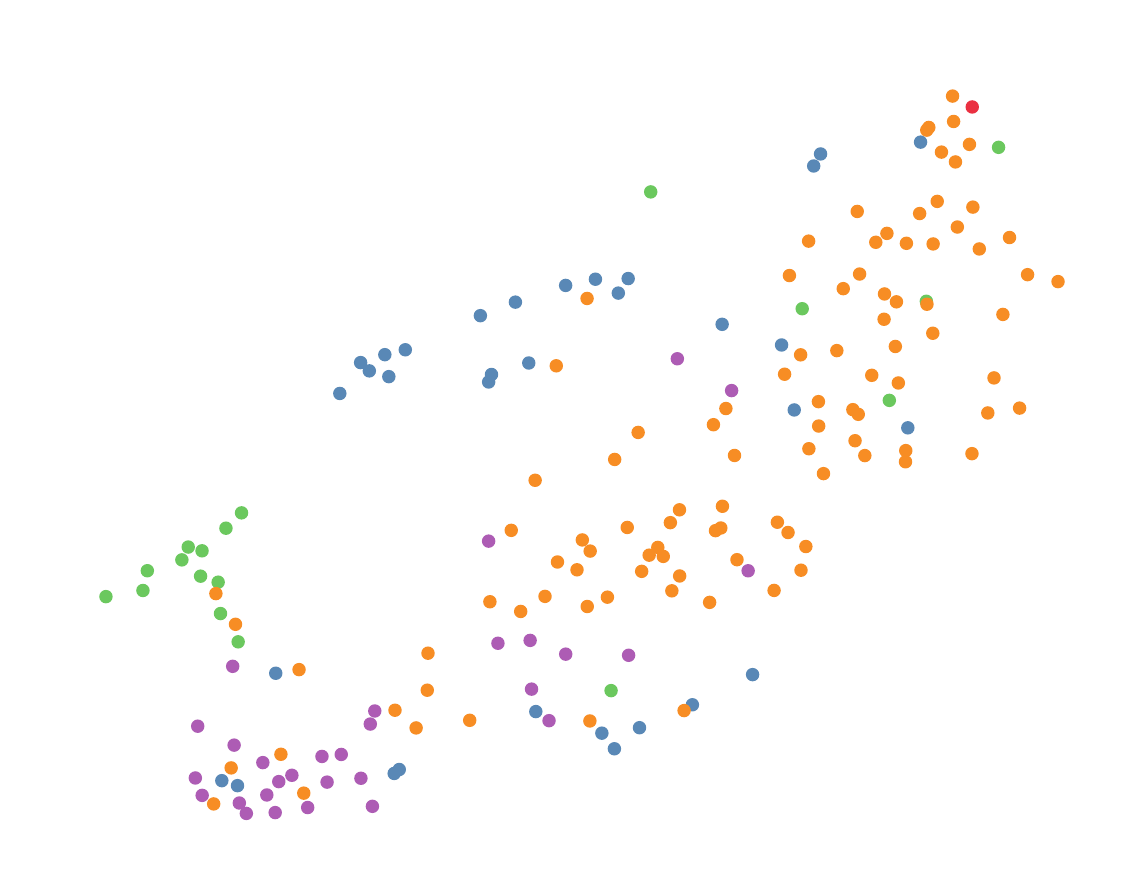}}
\hfill
\subfloat[Cornell: HP Channel]{
\captionsetup{justification = centering}
\includegraphics[width=0.3\textwidth]{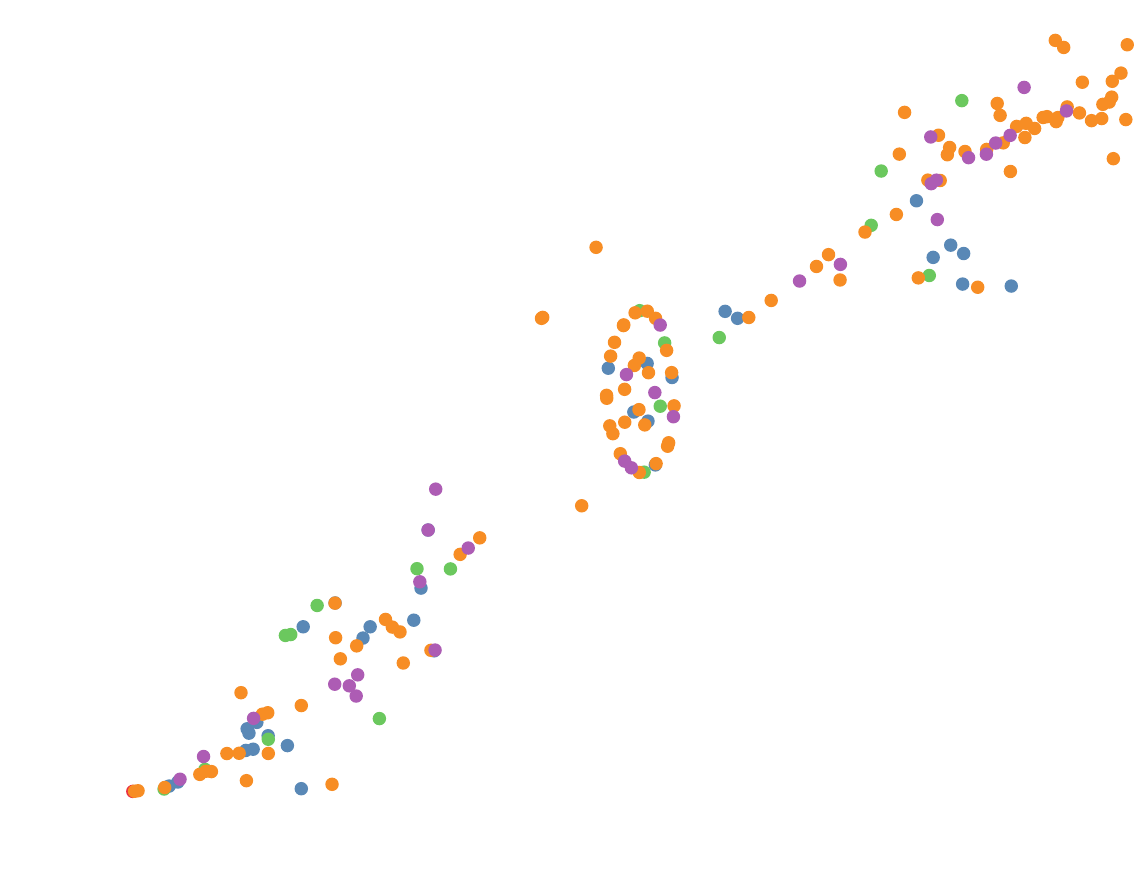}}
\hfill
\subfloat[Cornell: Two Channels]{
\captionsetup{justification = centering}
\includegraphics[width=0.3\textwidth]{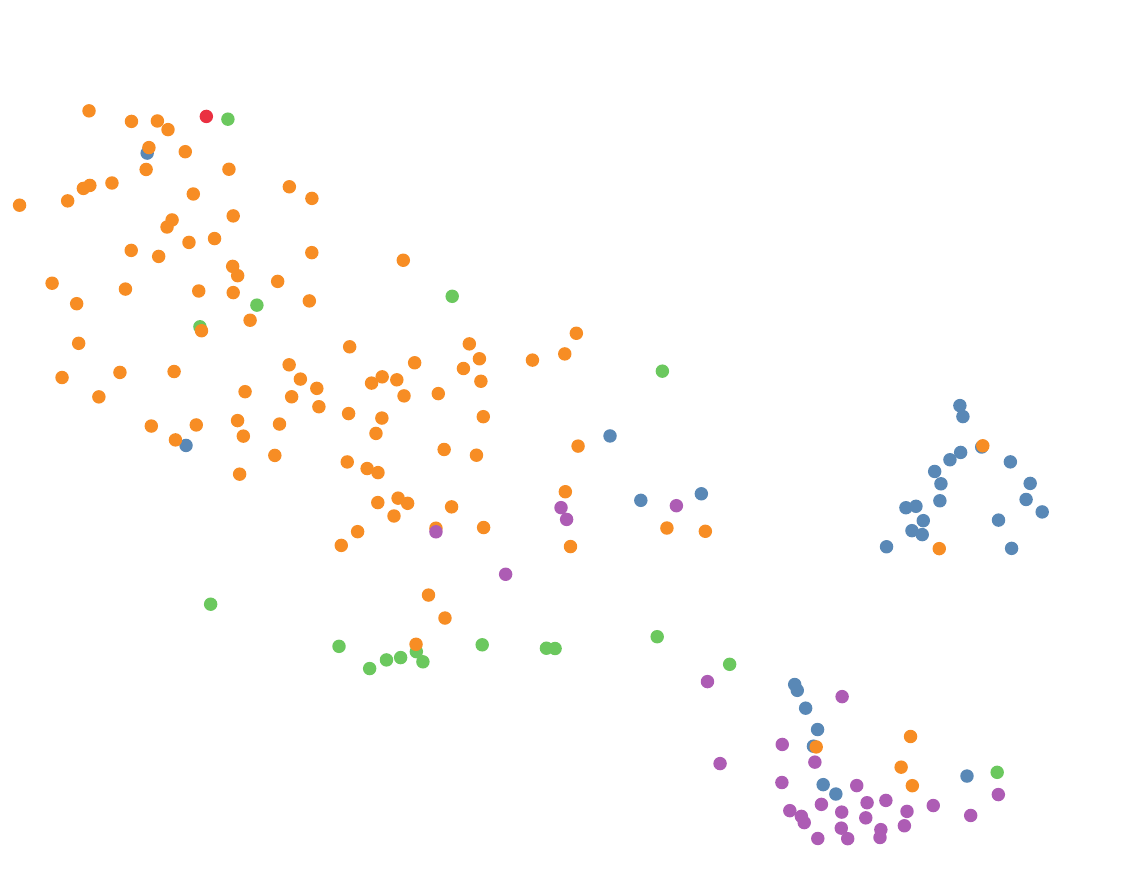}}
\\
\subfloat[Texas: LP Channel]{
\captionsetup{justification = centering}
\includegraphics[width=0.3\textwidth]{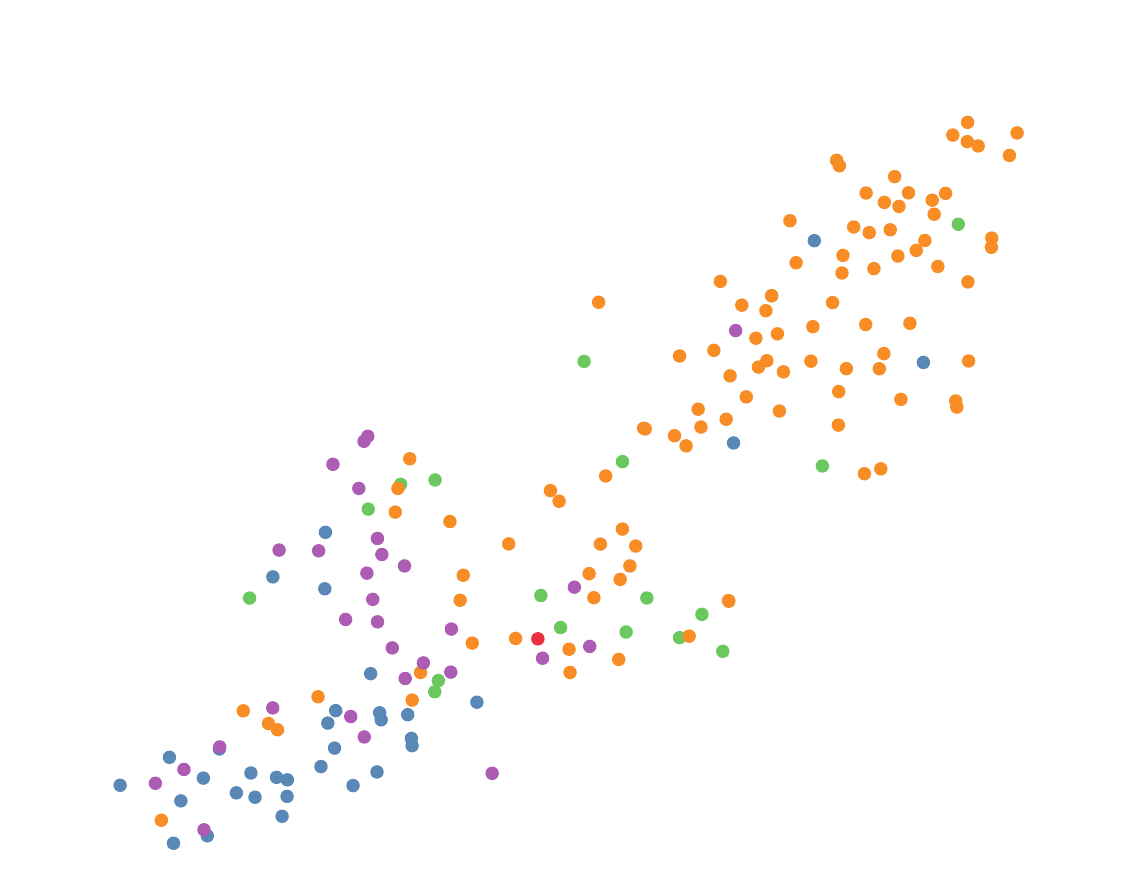}}
\hfill
\subfloat[Texas: HP Channel]{
\captionsetup{justification = centering}
\includegraphics[width=0.3\textwidth]{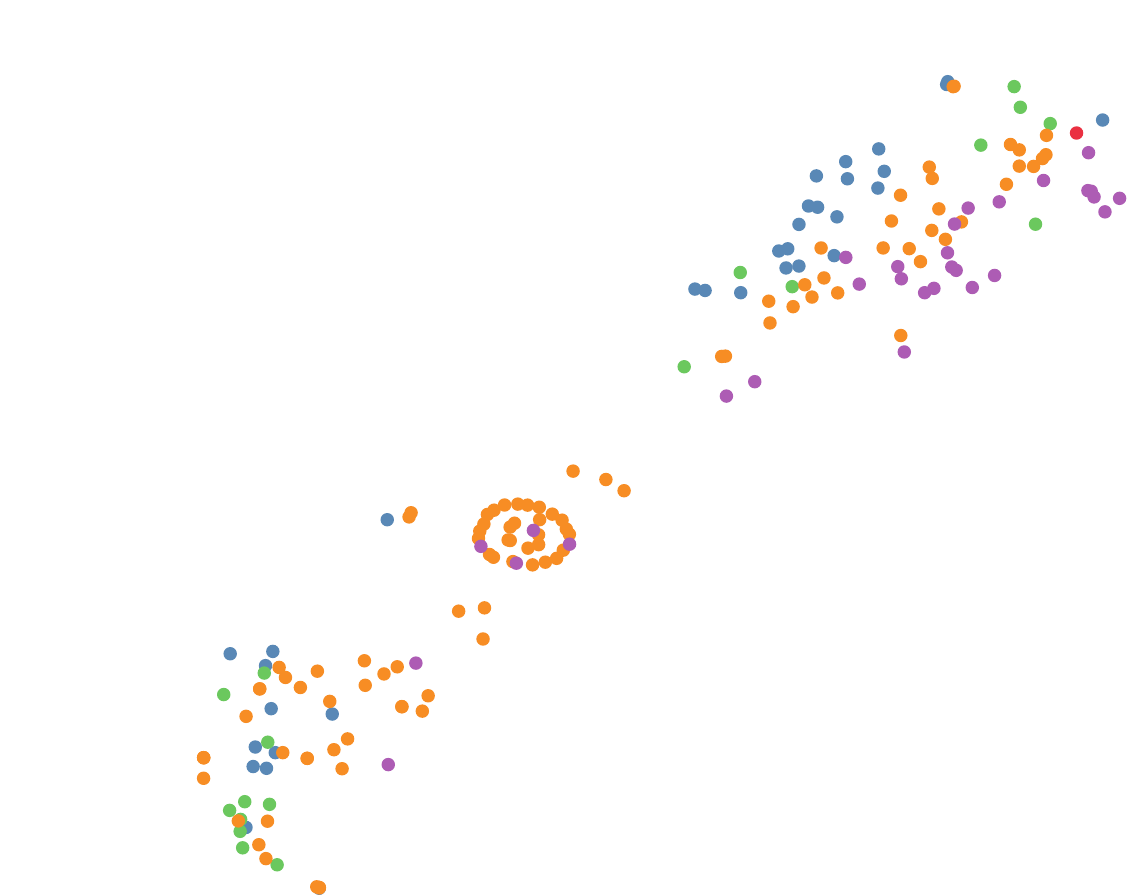}}
\hfill
\subfloat[Texas: Two Channels]{
\captionsetup{justification = centering}
\includegraphics[width=0.3\textwidth]{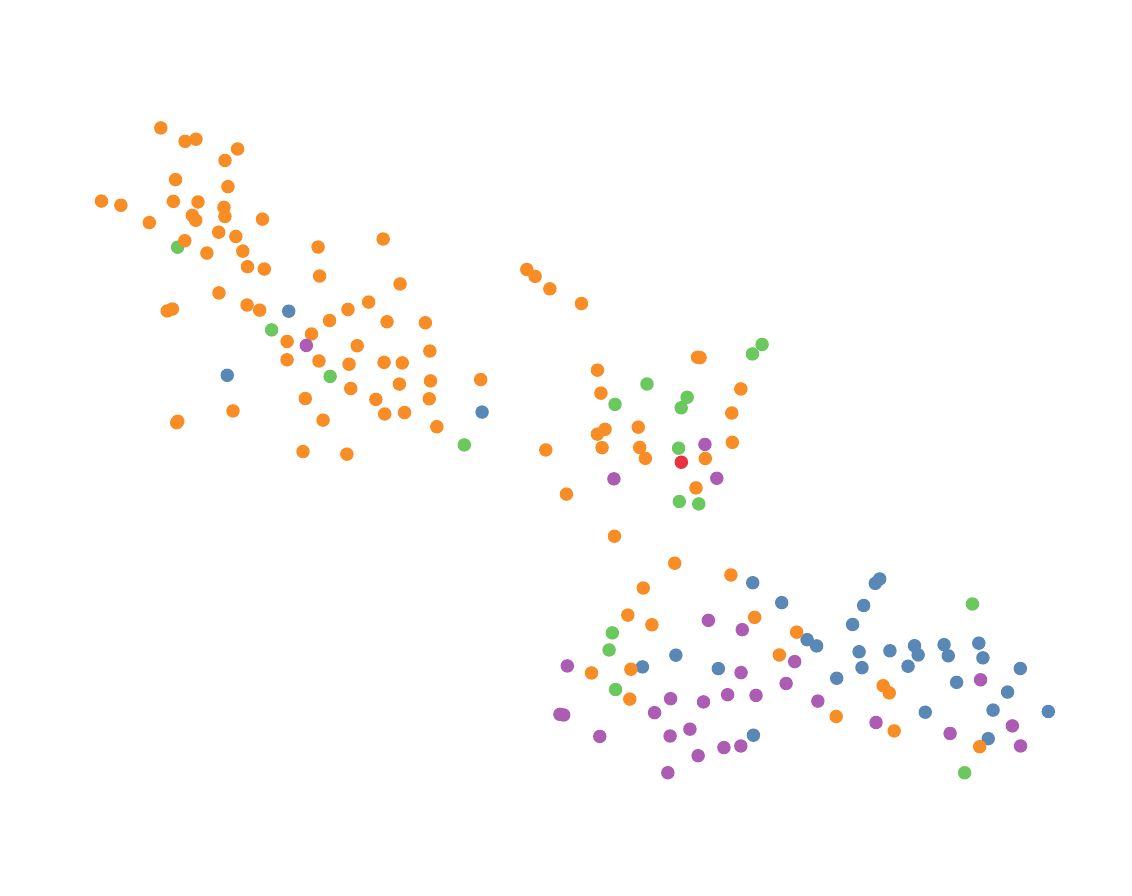}}
\\
\subfloat[Wisconsin: LP Channel]{
\captionsetup{justification = centering}
\includegraphics[width=0.3\textwidth]{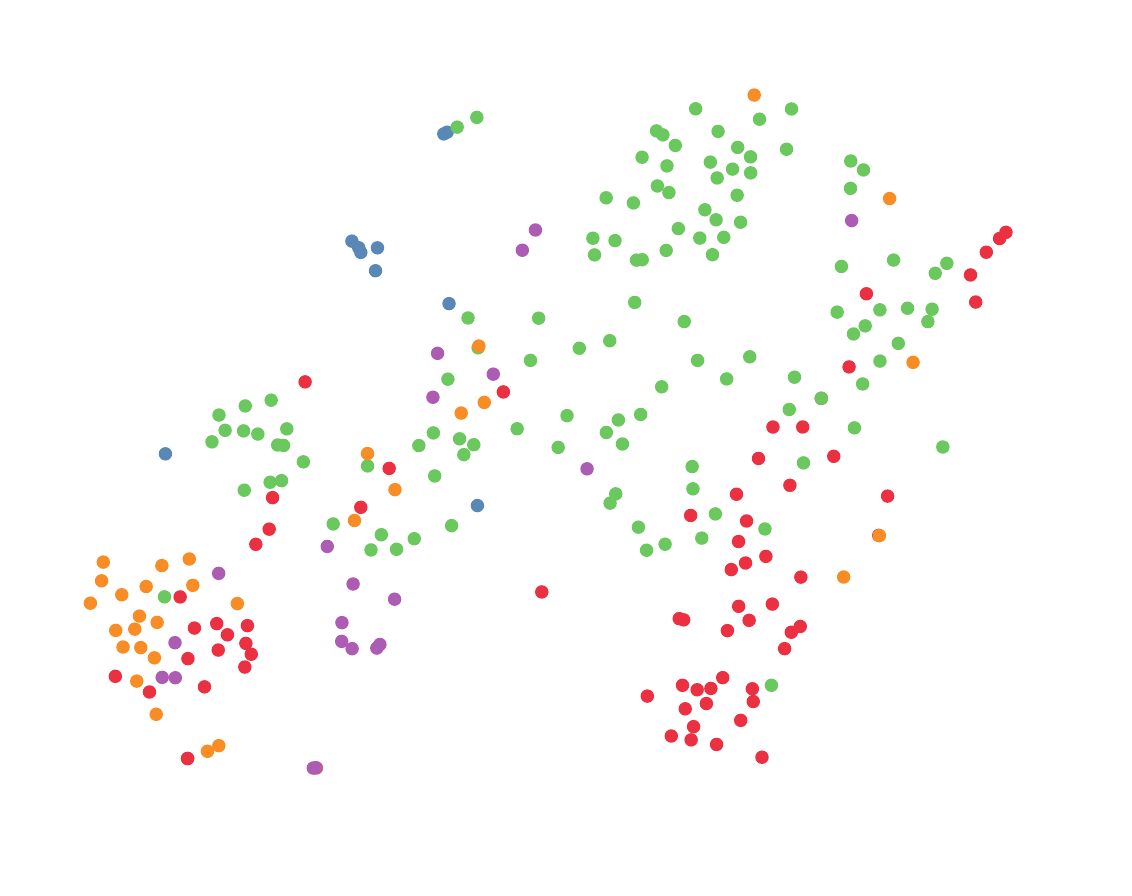}}
\hfill
\subfloat[Wisconsin: HP Channel]{
\captionsetup{justification = centering}
\includegraphics[width=0.3\textwidth]{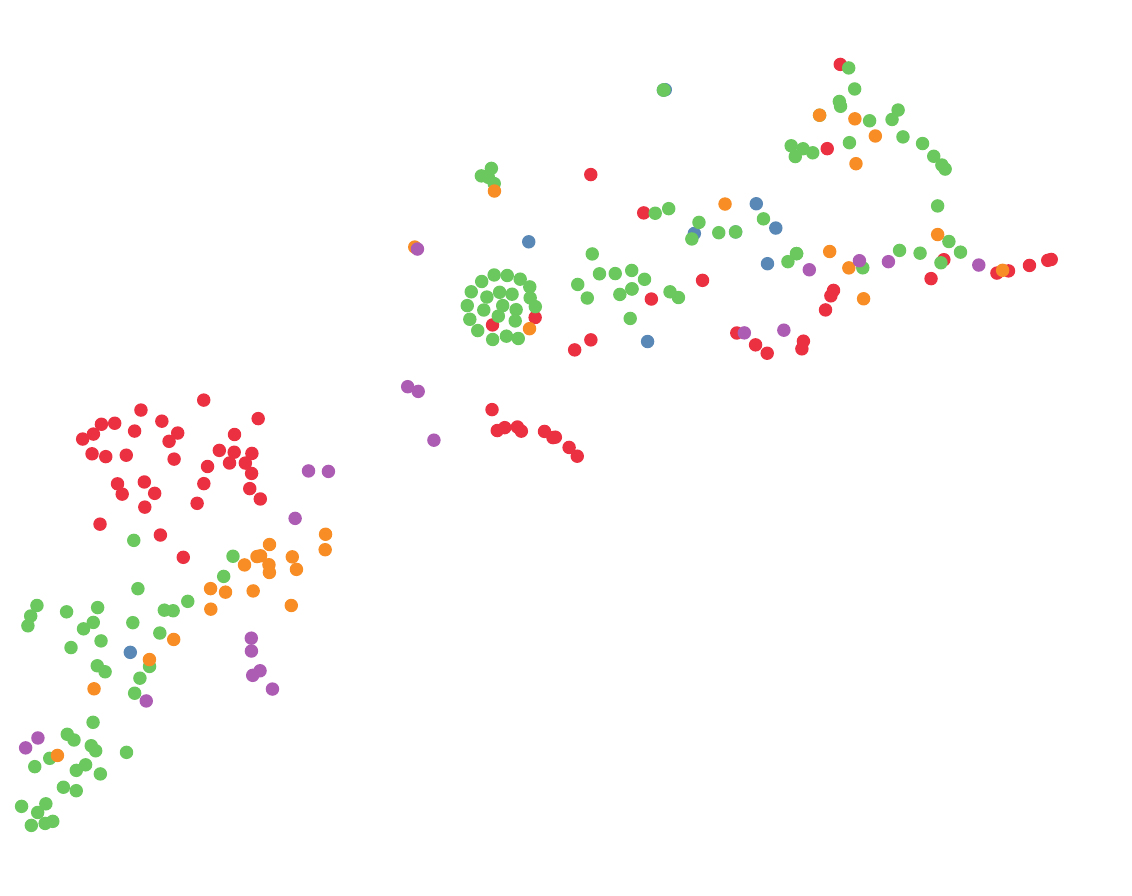}}
\hfill
\subfloat[Wisconsin: Two Channels]{
\captionsetup{justification = centering}
\includegraphics[width=0.3\textwidth]{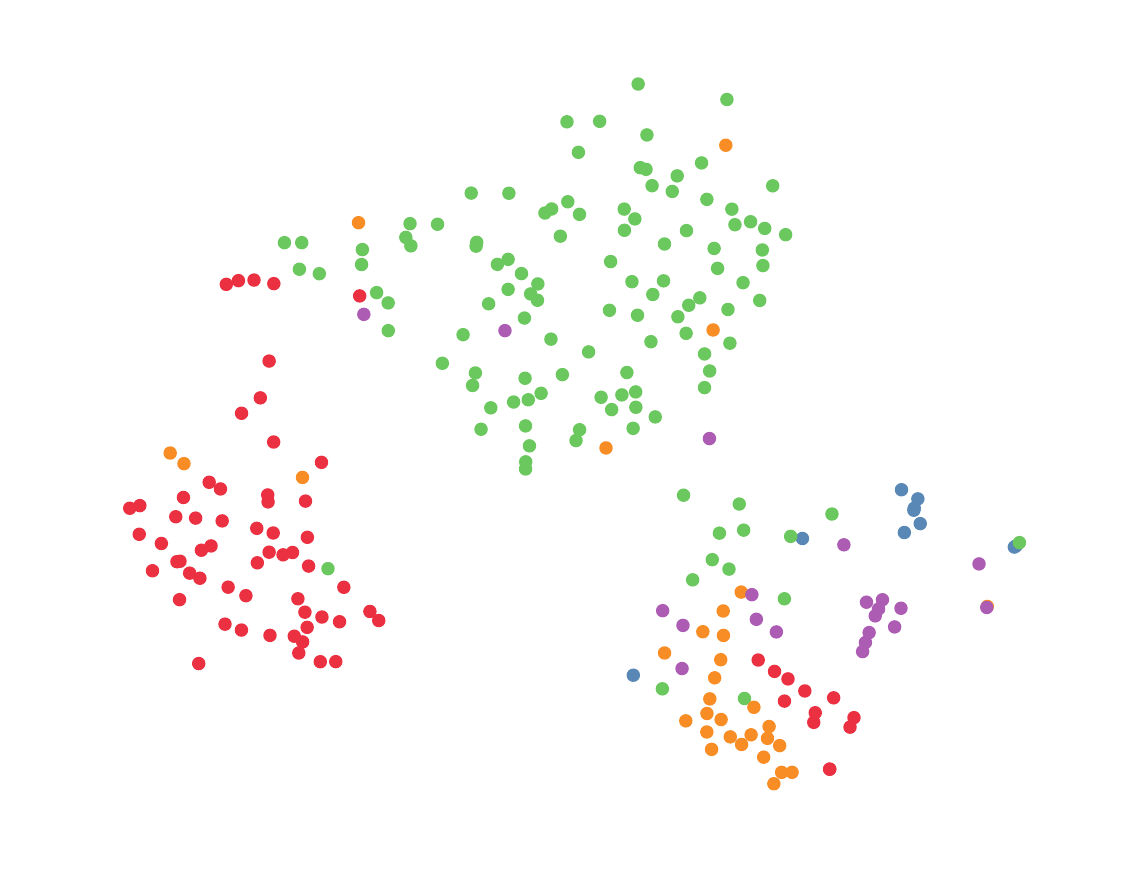}}
}
\caption{$t$-SNE Visualization of the Learned Node Embeddings for heterophilic datasets (other than Squirrel, which is presented in the main manuscript) under $3$ configurations.} %
\label{fig:tsne_heterophily}
\end{figure*}

\begin{figure*}[htbp]
\centering
{
\subfloat[CiteSeer: LP Channel]{
\captionsetup{justification = centering}
\includegraphics[width=0.3\textwidth]{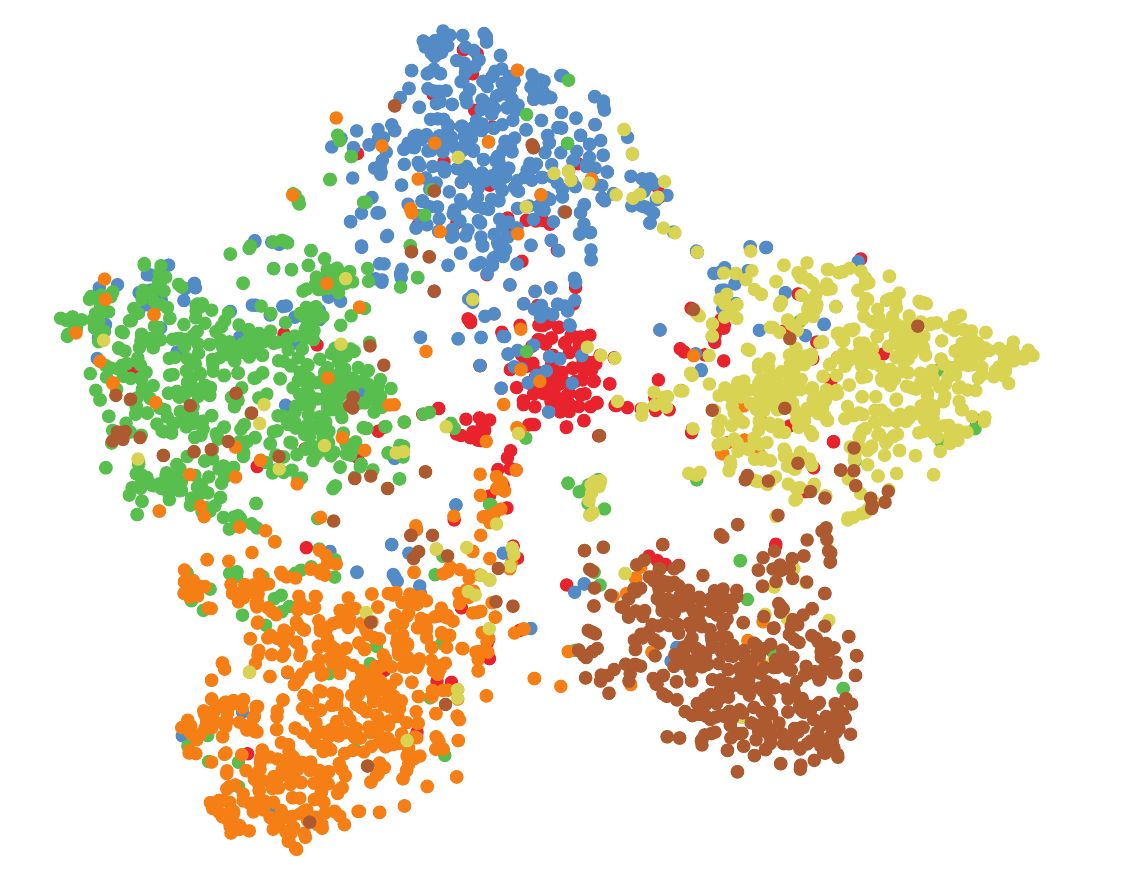}}
\hfill
\subfloat[CiteSeer: HP Channel]{
\captionsetup{justification = centering}
\includegraphics[width=0.3\textwidth]{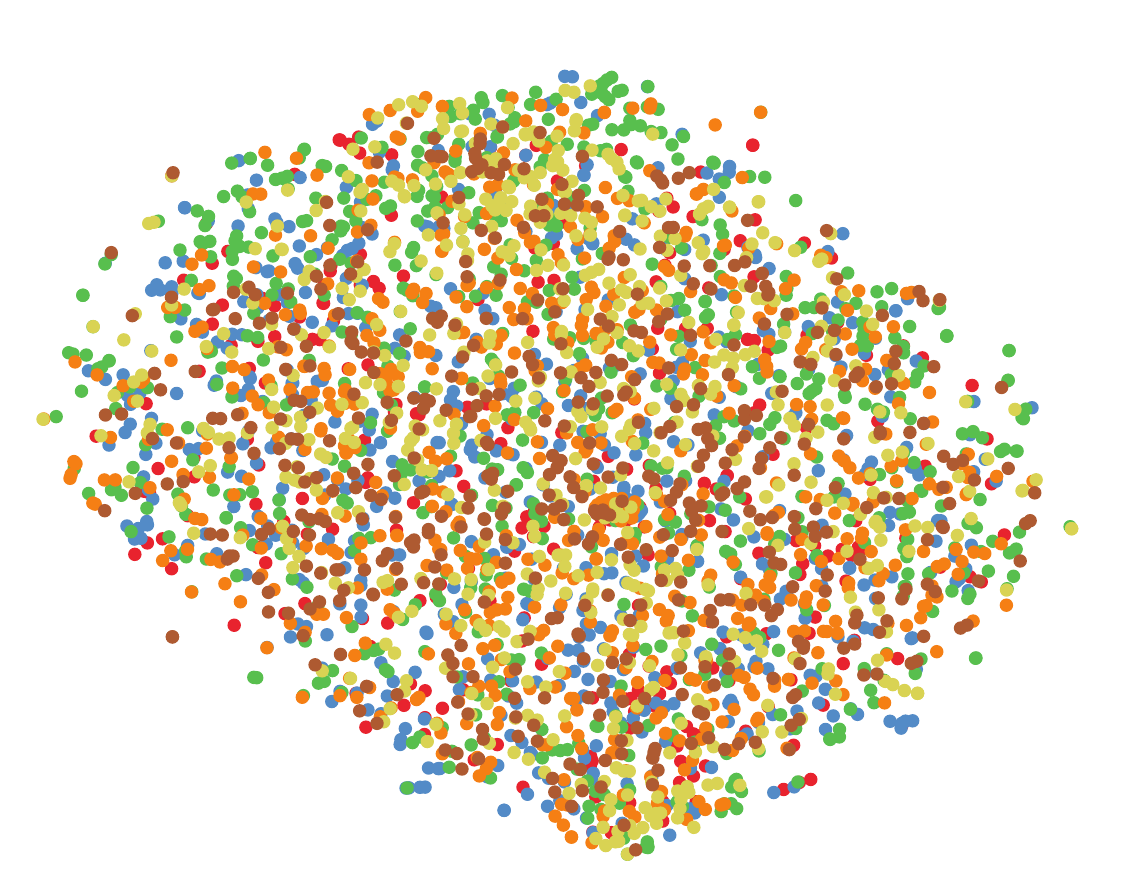}}
\hfill
\subfloat[CiteSeer: Two Channels]{
\captionsetup{justification = centering}
\includegraphics[width=0.3\textwidth]{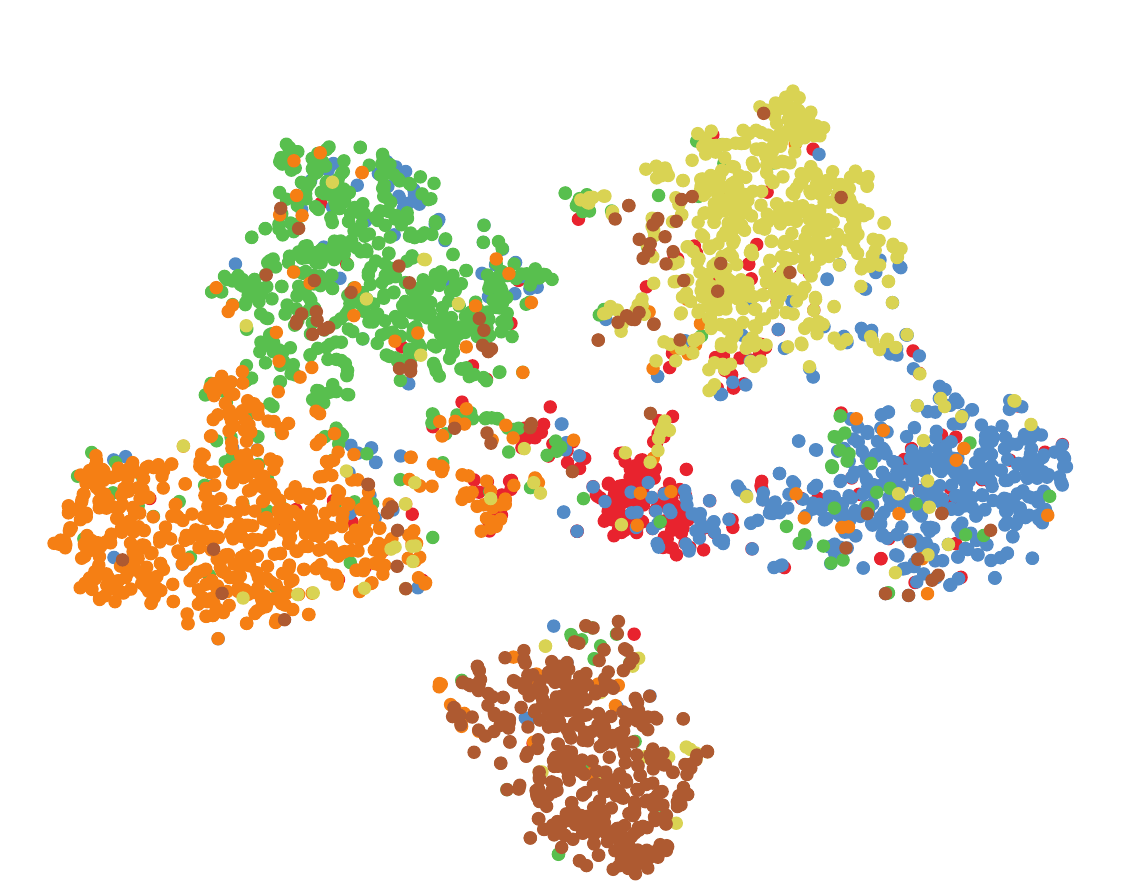}}
\\
\subfloat[Cora: LP Channel]{
\captionsetup{justification = centering}
\includegraphics[width=0.3\textwidth]{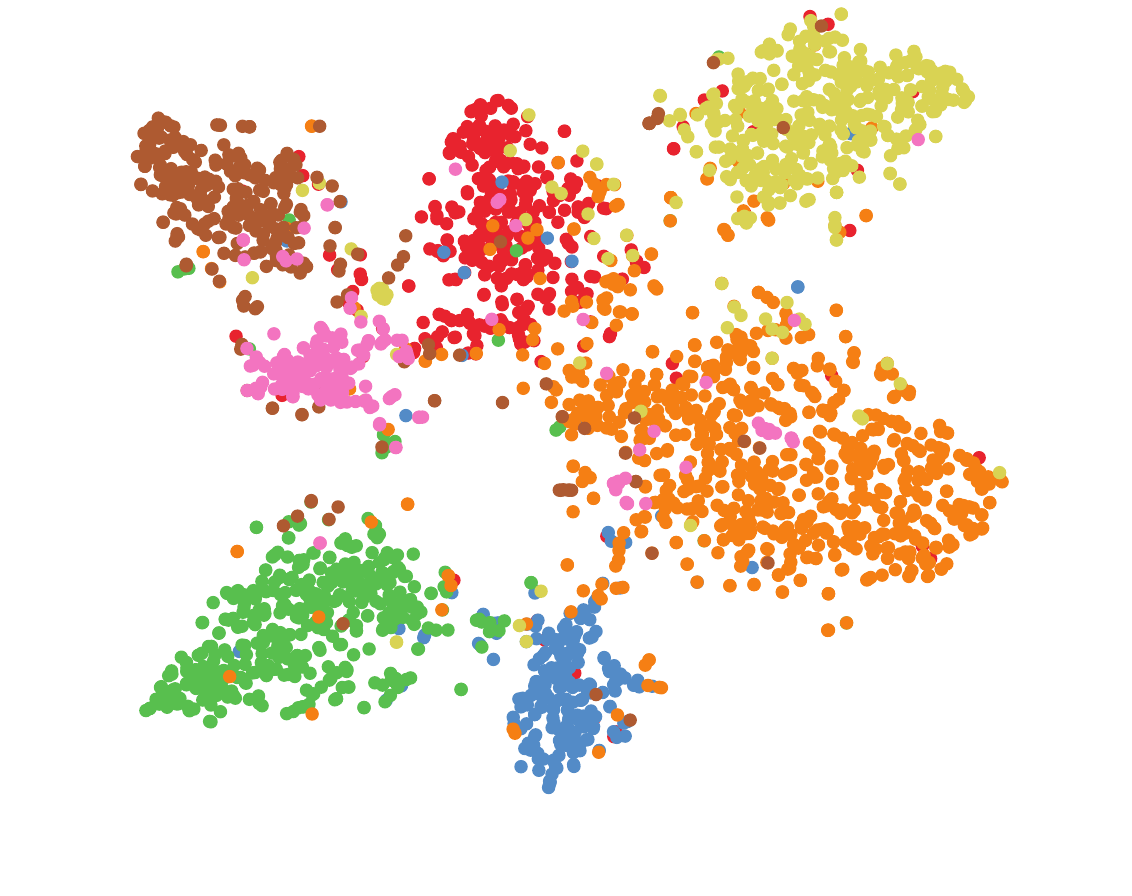}}
\hfill
\subfloat[Cora: HP Channel]{
\captionsetup{justification = centering}
\includegraphics[width=0.3\textwidth]{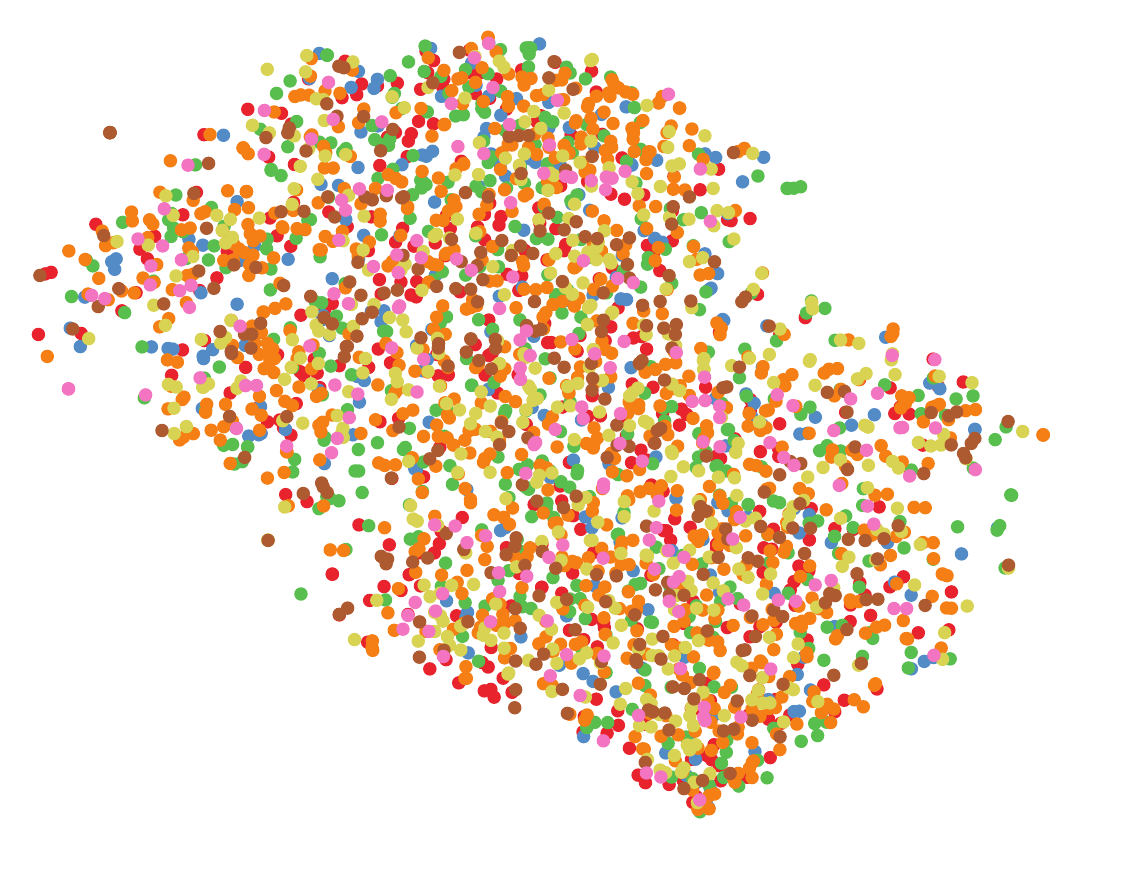}}
\hfill
\subfloat[Cora: Two Channels]{
\captionsetup{justification = centering}
\includegraphics[width=0.3\textwidth]{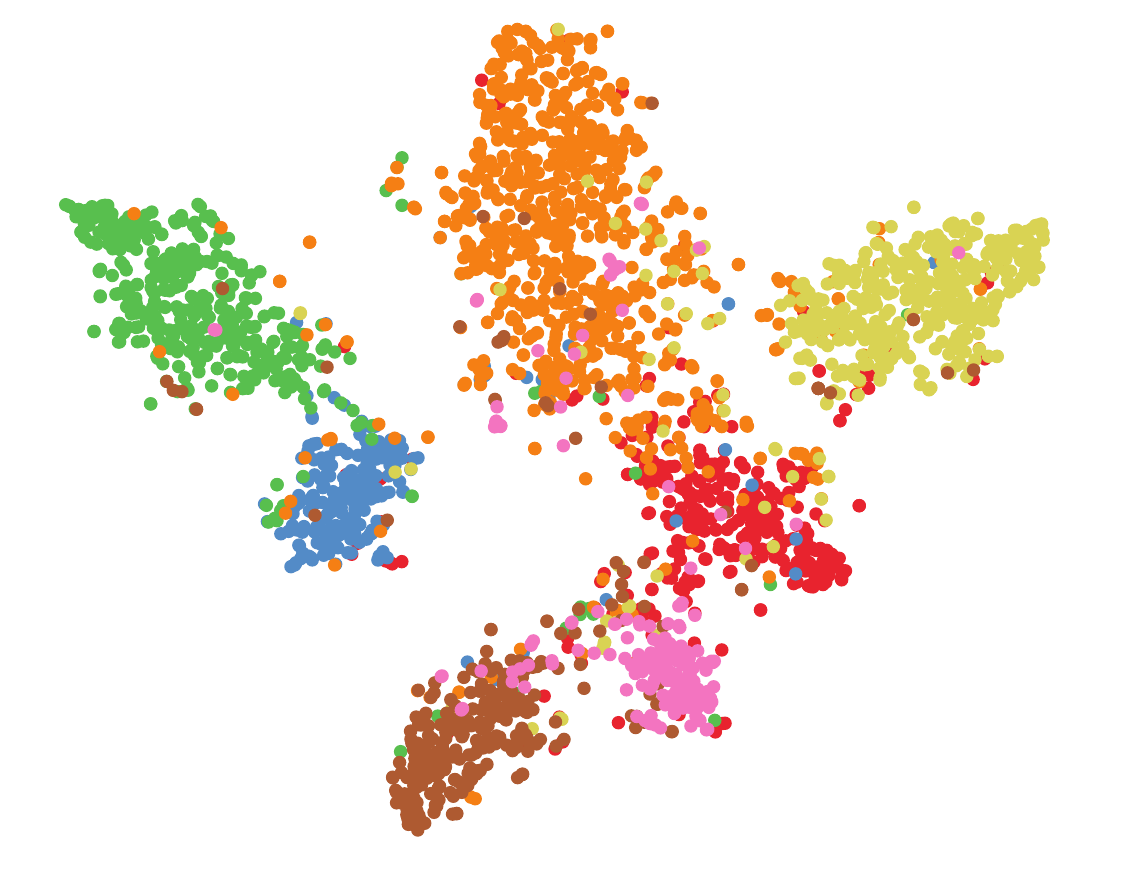}}
\\
\subfloat[PubMed: LP Channel]{
\captionsetup{justification = centering}
\includegraphics[width=0.3\textwidth]{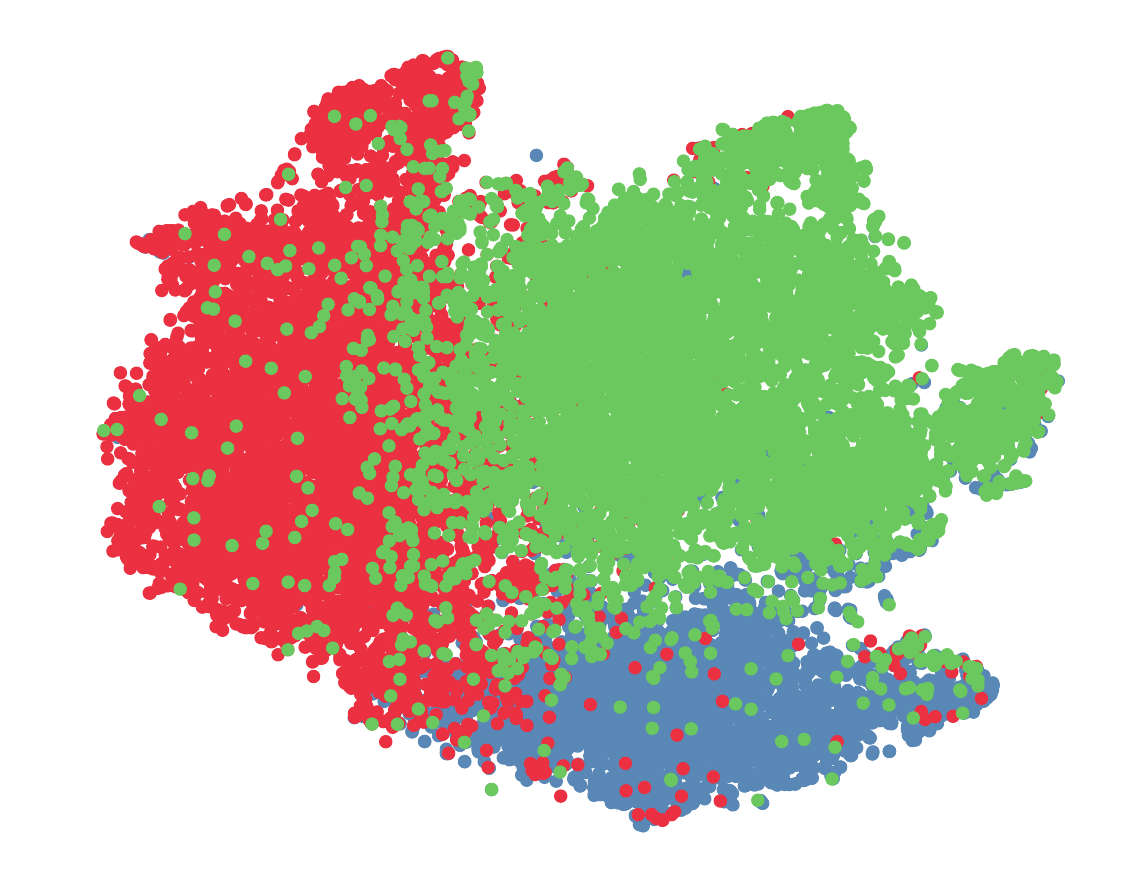}}
\hfill
\subfloat[PubMed: HP Channel]{
\captionsetup{justification = centering}
\includegraphics[width=0.3\textwidth]{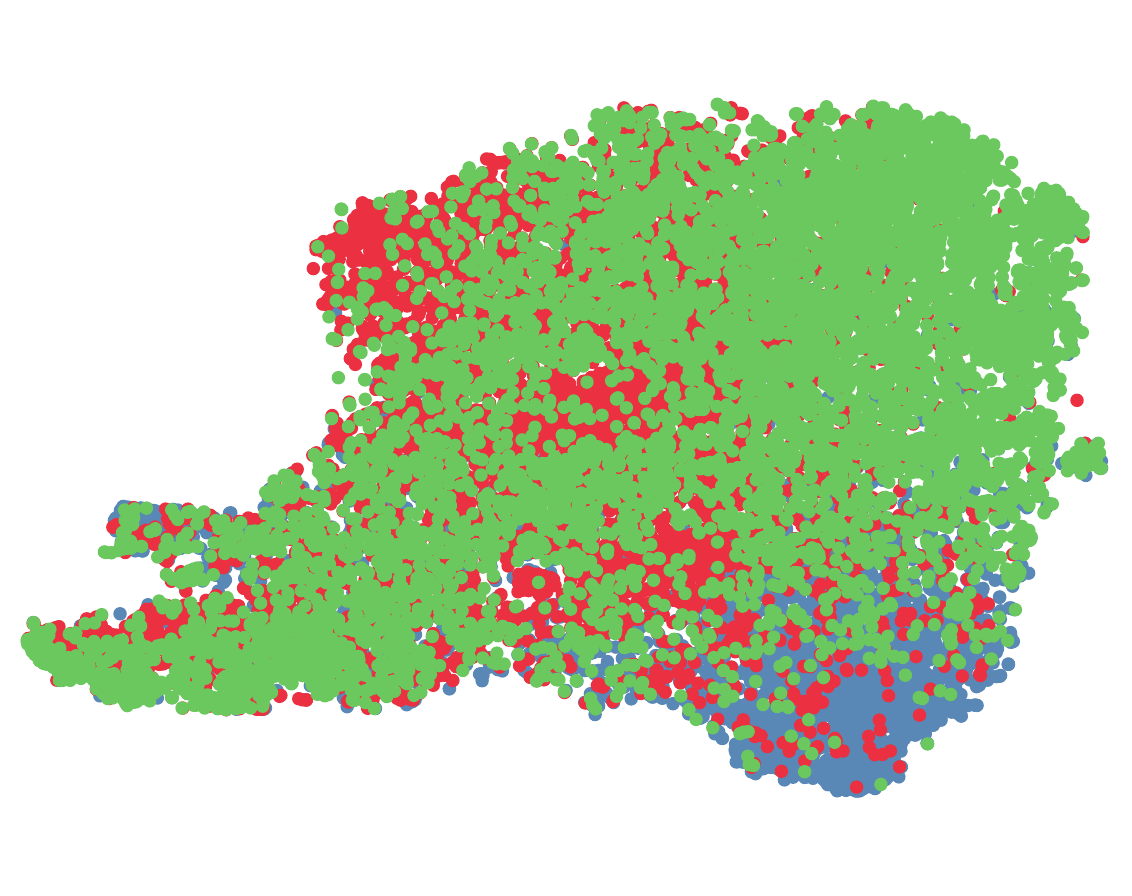}}
\hfill
\subfloat[PubMed: Two Channels]{
\captionsetup{justification = centering}
\includegraphics[width=0.3\textwidth]{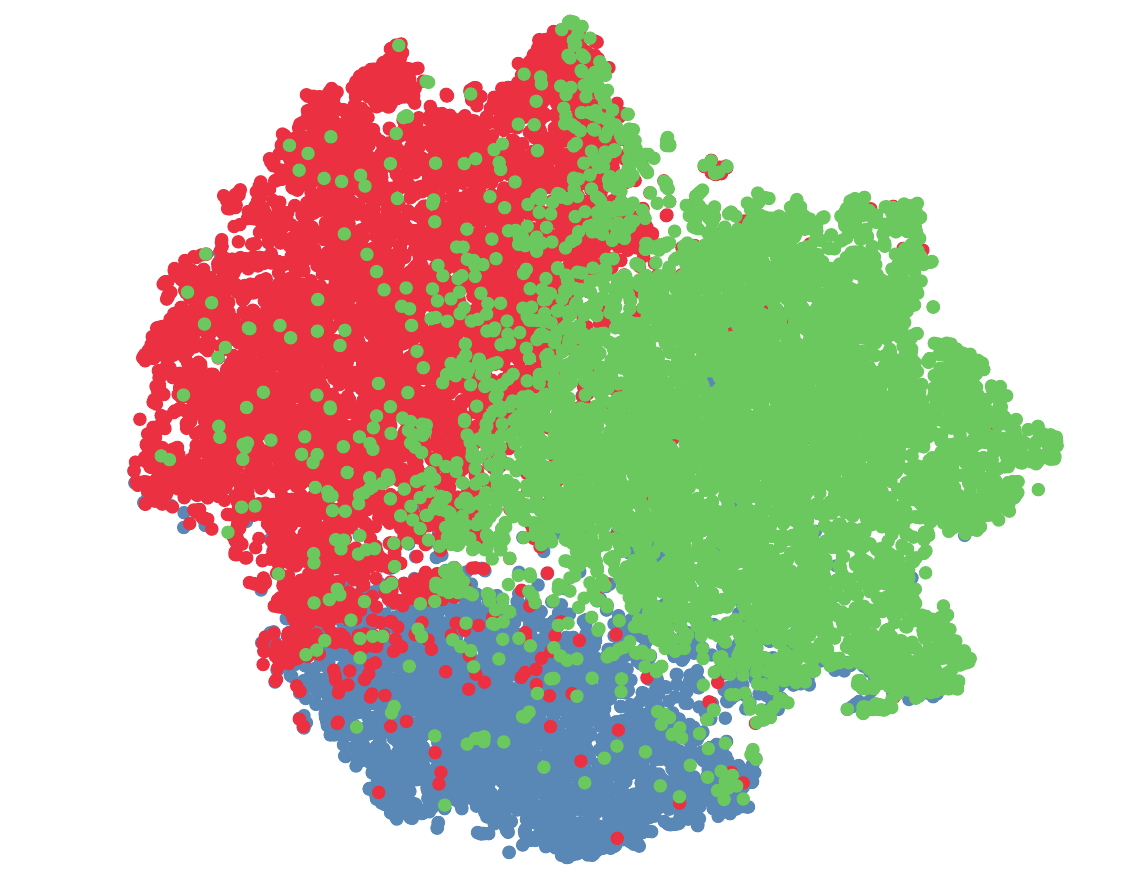}}
}
\caption{$t$-SNE Visualization of the Learned Node Embeddings for homophilic datasets under $3$ configurations.} %
\label{fig:tsne_homophily}
\end{figure*}
\clearpage

\section{Discussion of Filters}
\label{appendix:discussion_filters}
\subsection{Lazy Random Walk Matrix}
In graph signal processing, lazy random walk defined in the following equation is often used as a LP filter \cite{ekambaram2014graph},
\begin{equation}
    A_{\text{lrw}} = \frac{1}{2}(I+A_{\text{rw}}).
\end{equation}
It can be seen as adding $D_{ii}$ self-loops to the $i$-th node of $A$ and normalize it to be a random walk matrix and $0 \leq \lambda_i(A_{\text{lrw}}) \leq 1$. Such spectral property makes it a standard band-pass filter, which can avoid some theoretical confusion due to negative eigenvalues. Unlike $\hat{A}_\text{rw}$, $A_{\text{lrw}}$ maintains certain topology properties of $A_\text{rw}$, \eg{} stationary distribution and eigenvectors. In practice, these properties are supposed to be changed, unless one has strong prior knowledge that GNNs will benefit from the renormalized one. 

Furthermore, it is found that adding self-loops can shrink the magnitude of the dominant eigenvalue so that the influence of long-distance nodes will be reduced, which makes the filtered signal more dependent on local information \cite{hamilton2020graph}. There is growing empirical evidence showing that adding self-loops will lead to effective graph convolutions on some applications \cite{klicpera2018predict, wu2019simplifying}. Compared to $\hat{A}_\text{rw}$, $A_{\text{lrw}}$ works better at reducing the magnitude of dominant eigenvalue under certain conditions. We show it in the following theorem.
\begin{theorem} 1
We denote the generalized lazy random walk matrix and  the generalized renormalized adjacency matrix 
respectively by 
$$
A_{\text{lrw}}^\gamma = \frac{1}{1+\gamma}(\gamma I + A_{\text{rw}}),\ \ 
\hat{A}_{\text{rw}}^\gamma = \tilde{D}_\gamma^{-1} \tilde{A}_\gamma,
$$
where $\tilde{A}_\gamma = \gamma I + A, \tilde{D}_\gamma = \gamma I + D$. Suppose $\mathcal{G}$ has no isolated node, \ie{} $D_{ii} > 0$ for all $i$, and for positive $\gamma$, we have
\begin{equation}
    \frac{\lambda_2(A_{\text{lrw}}^\gamma)}{\lambda_1(A_{\text{lrw}}^\gamma)} \geq \frac{\lambda_2(\hat{A}_{\text{rw}}^\gamma)}{\lambda_1(\hat{A}_{\text{rw}}^\gamma)}, 
\end{equation}
where $\lambda_1(\cdot)$ and $\lambda_2(\cdot)$ are the largest and second largest eigenvalues a matrix.
\end{theorem}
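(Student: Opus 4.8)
The plan is to reduce the eigenvalue-ratio comparison to a monotonicity statement for a generalized eigenvalue problem. First I would record that both matrices share the top eigenvalue $\lambda_1 = 1$. The matrix $\hat{A}_{\text{rw}}^\gamma = \tilde{D}_\gamma^{-1}\tilde{A}_\gamma$ is row-stochastic (each row of $\tilde{A}_\gamma=\gamma I+A$ sums to $\gamma+D_{ii}=(\tilde{D}_\gamma)_{ii}$), while $A_{\text{lrw}}^\gamma=\frac{1}{1+\gamma}(\gamma I+A_{\text{rw}})$ inherits the eigenvalue $1$ from the stochastic matrix $A_{\text{rw}}$. Both are similar to symmetric matrices (conjugate $A_{\text{rw}}$ by $D^{1/2}$ and $\hat{A}_{\text{rw}}^\gamma$ by $\tilde{D}_\gamma^{1/2}$), so their spectra are real; since they are nonnegative with unit row sums and $\mathcal{G}$ has no bipartite component, the largest eigenvalue of each is exactly $1$. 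Hence $\lambda_2/\lambda_1=\lambda_2$ for both, and the claim is equivalent to $\lambda_2(A_{\text{lrw}}^\gamma)\ge\lambda_2(\hat{A}_{\text{rw}}^\gamma)$.

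Next I would write both operators in the common form $I-M^{-1}L$ using $A=D-L$. A short computation gives
\[
A_{\text{lrw}}^\gamma = I-\tfrac{1}{1+\gamma}D^{-1}L = I-\big((1+\gamma)D\big)^{-1}L, \qquad \hat{A}_{\text{rw}}^\gamma = \tilde{D}_\gamma^{-1}(\tilde{D}_\gamma-L) = I-(\gamma I+D)^{-1}L,
\]
so the two matrices differ only in the positive-definite mass matrix, $M_1=(1+\gamma)D$ versus $M_2=\gamma I+D$. The eigenvalues of $I-M^{-1}L$ are $1-\theta$, where $\theta$ ranges over the generalized eigenvalues of $Lu=\theta Mu$; since $L\succeq 0$ and $M\succ 0$ these satisfy $\theta\ge 0$, the smallest being $\theta=0$ with eigenvector $\mathbf{1}$. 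Therefore $\lambda_2(I-M^{-1}L)=1-\theta_2(L,M)$, where $\theta_2(L,M)$ is the second-smallest generalized eigenvalue, and the problem reduces to proving $\theta_2(L,M_1)\le\theta_2(L,M_2)$.

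I would finish with the Courant--Fischer characterization $\theta_2(L,M)=\min_{\dim V=2}\max_{0\ne u\in V}\frac{u^T L u}{u^T M u}$. The key structural fact is the semidefinite ordering of the mass matrices: $M_1-M_2=(1+\gamma)D-(\gamma I+D)=\gamma(D-I)\succeq 0$, because the no-isolated-node hypothesis ($D_{ii}>0$, hence $D_{ii}\ge 1$ for an unweighted graph) gives $D\succeq I$ and $\gamma>0$. Thus $u^T M_1 u\ge u^T M_2 u$ for every $u$, and since $u^T L u\ge 0$ this yields the pointwise bound $\frac{u^T L u}{u^T M_1 u}\le\frac{u^T L u}{u^T M_2 u}$. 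Taking the maximum over any two-dimensional subspace $V$ and then the minimum over $V$ preserves the inequality, giving $\theta_2(L,M_1)\le\theta_2(L,M_2)$, equivalently $\lambda_2(A_{\text{lrw}}^\gamma)\ge\lambda_2(\hat{A}_{\text{rw}}^\gamma)$, which is exactly the asserted ratio bound.

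The main obstacle is conceptual rather than computational: recognizing that both candidate filters are instances of $I-M^{-1}L$, so the comparison collapses to a monotonicity of the generalized spectral gap in the mass matrix. Once that is seen, the only quantitative input is the ordering $(1+\gamma)D\succeq\gamma I+D$, which is precisely where the assumption $D_{ii}\ge 1$ enters; everything else is a direct application of the min--max theorem for the pencil $(L,M)$. I would also note in passing that the argument shows the inequality is strict unless $D=I$ (every node has degree one), i.e. unless the two mass matrices coincide.
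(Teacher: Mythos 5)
Your proof is correct, and it reaches the result by a cleaner organizational route than the paper, even though the two arguments ultimately rest on the same two quantitative facts. The paper symmetrizes both matrices, writes $\lambda_2$ of each as a maximum of a Rayleigh quotient over a hyperplane orthogonal to the respective top eigenvector, invokes Courant--Fischer once to move the constraint for $\hat{A}_{\text{sym}}^\gamma$ onto the hyperplane $\bm{x}\perp D^{1/2}\bm{1}$, bounds the difference of the two maxima by the minimum of the pointwise difference, and then carries out an explicit algebraic factorization ending in $\gamma\bigl(1-\tfrac{\bm{y}^TA\bm{y}}{\bm{y}^TD\bm{y}}\bigr)\bigl(1-\tfrac{\bm{y}^T\bm{y}}{\bm{y}^TD\bm{y}}\bigr)\ge 0$. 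You instead observe that both filters are of the form $I-M^{-1}L$ with mass matrices $M_1=(1+\gamma)D$ and $M_2=\gamma I+D$, reduce the claim to monotonicity of the second-smallest generalized eigenvalue of the pencil $(L,M)$ in $M$, and close with the Loewner ordering $M_1-M_2=\gamma(D-I)\succeq 0$. The paper's final factorization is in fact exactly your pointwise comparison in disguise (its two factors are $\bm{y}^TL\bm{y}/\bm{y}^TD\bm{y}$ and $\bm{y}^T(D-I)\bm{y}/\bm{y}^TD\bm{y}$), so both proofs consume precisely $L\succeq 0$ and $D\succeq I$; what your packaging buys is that you never need to symmetrize, never juggle the two different orthogonality constraints, and replace the page of algebra with a one-line monotonicity of Rayleigh quotients, at the cost of invoking the min--max theorem for a generalized pencil rather than for a single symmetric matrix. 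Two minor points: the appeal to non-bipartiteness for $\lambda_1=1$ is unnecessary (your own representation $I-M^{-1}L$ with $L\succeq 0$, $M\succ 0$ already gives all eigenvalues $\le 1$ with equality attained at $\bm{1}$), and your closing aside that the inequality is strict unless $D=I$ does not quite follow from the argument as given, since the pointwise inequality can be an equality at the optimizing subspace even when $D\neq I$; neither affects the validity of the main proof.
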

\begin{proof}
    Detailed proof can be found in Appendix \ref{appendix:proof}.
\end{proof}

The HP filer derived from $A_{\text{lrw}}$ is $(I-A_{\text{rw}})/2$ and they can be used as a set of filterbank in FB-GNN framework. From table \ref{tab:lazy_rw} we can see that, FB-GNNs with lazy random walk matrix can boost the performance of baseline GNNs more significantly than symmetric renormalized affinity matrix on heterophilic datasets \textit{Cornell, Wisconsin, Texas} and \textit{Film}, where baseline GNNs underperform MLP. And on homophilic datasets, where baseline GNNs outperform MLP, FB-GNNs with symmetric renormalized affinity matrix perform better.
\clearpage
\begin{table}[htbp]
  \centering
  \small
  \caption{Comparison of FB-GNNs with lazy random walk and symmetric renormalized affinity matrix}
    \begin{tabular}{c|ccccccccc}
    \toprule
    \toprule
    Models\textbackslash{}Datasets & Cornell & Wisconsin & Texas & Film  & Chameleon & Squirrel & Cora  & Citeseer  & Pubmed  \\
    \midrule
    MLP   & 85.14 & 87.25 & 84.59 & 36.08 & 46.21 & 29.39 & 74.81 & 73.45 & 87.86 \\
    \midrule
    GCN   & 60.81 & 63.73 & 61.62 & 30.98 & 61.34 & 41.86 & 87.32 & 76.70 & 88.24 \\
    FB-GCM-sym & 83.78 & 87.45 & 84.59 & 35.47 & \cellcolor[rgb]{ .647,  .647,  .647}\textbf{65.66} & \cellcolor[rgb]{ .647,  .647,  .647}\textbf{50.56} & 87.34 & 76.42 & \cellcolor[rgb]{ .647,  .647,  .647}\textbf{89.74} \\
    FB-GCN-lazy & 85.14 & 87.25 & 86.49 & 36.11 & 61.07 & 46.28 & 87.46 & 76.24 & 89.57 \\
    \midrule
    GAT   & 59.19 & 60.78 & 59.73 & 29.71 & 61.95 & 43.88 & \cellcolor[rgb]{ .647,  .647,  .647}\textbf{88.07} & 76.42 & 87.81 \\
    FB-GAT-sym & 87.30 & 85.88 & 82.70 & 36.00 & 63.75 & 44.07 & 87.34 & 76.45 & 89.03 \\
    FB-GAT-lazy & \cellcolor[rgb]{ .647,  .647,  .647}\textbf{88.92} & \cellcolor[rgb]{ .647,  .647,  .647}\textbf{89.22} & \cellcolor[rgb]{ .647,  .647,  .647}\textbf{88.65} & \cellcolor[rgb]{ .647,  .647,  .647}\textbf{36.83} & 62.80 & 43.56 & 87.28 & 76.92 & 89.43 \\
    \midrule
    GraphSage & 82.97 & 87.84 & 82.43 & 35.28 & 47.32 & 30.16 & 85.98 & 77.07 & 88.59 \\
    FB-GraphSage-sym & 86.44 & 88.43 & 86.22 & 35.88 & 48.11 & 33.24 & 86.62 & \cellcolor[rgb]{ .647,  .647,  .647}\textbf{78.01} & 89.05 \\
    FB-GraphSage-lazy & 86.49 & 87.45 & 87.30 & 35.00 & 48.57 & 30.44 & 86.35 & 76.48 & 88.58 \\
    \midrule
    diff(FB-sym, baseline) & 18.18 & 16.47 & 16.58 & 3.79  & 2.30  & 3.99  & -0.02 & 0.23  & 1.06 \\
    diff(FB-lazy, baseline) & 19.19 & 17.19 & 19.55 & 3.99  & 0.61  & 1.46  & -0.09 & -0.18 & 0.98 \\
    \bottomrule
    \bottomrule
    \end{tabular}%
  \label{tab:lazy_rw}
\end{table}%

\begin{table}[htbp]
  \centering
  \small
  \caption{More comparisons of FB-GNN with lazy random walk and symmetric renormalized matrix}
    \begin{tabular}{c|ccccc}
    \toprule
    \toprule
    Models\textbackslash{}Datasets & ﻿CitationFull\_dblp &  Coauthor\_CS &  Coauthor\_Physics &  Amazon\_Computers & Amazon\_Photo  \\
    \midrule
    MLP   & 77.39 & 93.72 & 95.77 & 83.89 & 90.87 \\
    \midrule
    GCN   & 85.87 & 93.91 & 96.84 & 87.03 & 93.61 \\
    FB-GCM-sym & \cellcolor[rgb]{ .647,  .647,  .647}\textbf{85.90} & \cellcolor[rgb]{ .647,  .647,  .647}\textbf{95.33} & 97.03 & \cellcolor[rgb]{ .647,  .647,  .647}\textbf{91.54} & \cellcolor[rgb]{ .647,  .647,  .647}\textbf{95.57} \\
    FB-GCN-lazy & 85.51 & 95.31 & \cellcolor[rgb]{ .647,  .647,  .647}\textbf{97.07} & 91.32 & 95.53 \\
    \midrule
    GAT   & 85.89 & 93.41 & 96.32 & 89.74 & 94.12 \\
    FB-GAT-sym & 84.94 & 94.13 & OOM   & 89.57 & 94.58 \\
    FB-GAT-lazy & 85.35 & 95.30 & OOM   & 91.10 & 94.88 \\
    \midrule
    GraphSage & 81.19 & 94.38 & OOM   & 83.70 & NA \\
    FB-GraphSage-sym & 85.66 & 94.97 & OOM   & 88.01 & 91.40 \\
    FB-GraphSage-lazy & 85.02 & 95.00 & OOM   & 87.12 & 91.02 \\
    \midrule
    diff(FB-sym, baseline) & 1.18  & 0.91  & 0.45  & 2.88  & -0.02 \\
    diff(FB-lazy, baseline) & 0.98  & 1.30  & 0.49  & 3.02  & -0.06 \\
    \bottomrule
    \bottomrule
    \end{tabular}%
  \label{tab:lazy_rw_other_results}%
\end{table}%

\subsection{Proof of Eigengap}
\label{appendix:proof}
\begin{proof}
Denote the symmetric normalized lazy random walk matrix
and the generalized symmetric renormalized adjacency matrix respectively by
$$
A_{\text{slrw}}^\gamma = \frac{1}{1+\gamma}(\gamma I + A_{\text{sym}}),\ \ 
\hat{A}_\text{sym}^\gamma = \tilde{D}_\gamma^{-1/2} \tilde{A}_\gamma \tilde{D}_\gamma^{-1/2}
$$

It is easy to verify that 
$$
\lambda(A_{\text{slrw}}^\gamma) = \lambda(A_{\text{lrw}}^\gamma), \ \ 
\lambda( \hat{A}_\text{sym}) = \lambda( \hat{A}_\text{rw}^\gamma)
$$
where $\lambda(\cdot)$ denotes the spectrum of a matrix.
Since $\lambda_1(A_\text{lrw}^\gamma) = \lambda_1(\hat{A}_\text{rw}^\gamma) = 1$, 
to prove the theorem, it is necessary and sufficient to prove
\begin{equation}
    \lambda_2(A_{\text{slrw}}^\gamma) \geq \lambda_2(\hat{A}_{\text{sym}}^\gamma) 
\end{equation}
It is easy to show that $D^{1/2} \bm{1}$ is an eigenvector of $A_{\text{slrw}}^\gamma$
corresponding to $\lambda_1(A_{\text{slrw}}^\gamma)$
and $\tilde{D}_\gamma^{1/2} \bm{1}$ is an eigenvector $\hat{A}_\text{sym}^\gamma$
corresponding to $\lambda_1(\hat{A}_\text{sym}^\gamma)$. 

By the Rayleigh quotient theorem (\cite{HorJ13}),
\begin{align} \label{eq:aslrw}
\lambda_2(A_{\text{slrw}}^\gamma)
=   \max_{\bm{x}\perp D^{1/2}\bm{1}} \frac{\bm{x}^T D^{-1/2}(\gamma D + A) D^{-1/2} \bm{x}}{(1+\gamma) \bm{x}^T \bm{x}}  
\end{align}
By the Rayleigh quotient theorem and the Courant-Fischer min-max theorem (\cite{HorJ13}), 
\begin{align}
 \lambda_2(\hat{A}_{\text{sym}}^\gamma)
= & \max_{\bm{x} \perp  \tilde{D}_\gamma^{1/2} \bm{1}} \frac{\bm{x}^T \tilde{D}_\gamma^{-1/2}(\gamma I + A) \tilde{D}_\gamma^{-1/2} \bm{x}}{\bm{x}^T \bm{x}} \nonumber \\ 
= & \min_{\{S: \dim{S}=n-1\}} \max_{\{\x:0\neq \x\in S\}} 
\frac{\bm{x}^T \tilde{D}_\gamma^{-1/2}(\gamma I + A) \tilde{D}_\gamma^{-1/2} \bm{x}}{\bm{x}^T \bm{x}} 
\nonumber\\
\leq & \max_{\bm{x}\perp D^{1/2}\bm{1}} 
\frac{\bm{x}^T \tilde{D}_\gamma^{-1/2}(\gamma I + A) \tilde{D}_\gamma^{-1/2} \bm{x}}{\bm{x}^T \bm{x}}
\label{eq:asym}
\end{align}
Then from \eqref{eq:aslrw} and \eqref{eq:asym} we obtain
\begin{align*}
&\lambda_2(A_{\text{slrw}}^\gamma) -  \lambda_2(\hat{A}_{\text{sym}}^\gamma)\\ 
& \geq \max_{\bm{x}\perp D^{1/2}\bm{1}} \frac{\bm{x}^T D^{-1/2}(\gamma D + A) D^{-1/2} \bm{x}}{(1+\gamma) \bm{x}^T \bm{x}} \\ 
& \ \ \ \ - \max_{\bm{x} \perp {D}_\gamma^{1/2} \bm{1}} \frac{\bm{x}^T \tilde{D}_\gamma^{-1/2}(\gamma I + A) \tilde{D}_\gamma^{-1/2} \bm{x}}{\bm{x}^T \bm{x}}  \\
& = \max_{\bm{y}\perp D \bm{1}} \frac{\bm{y}^T (\gamma D + A) \bm{y}}{(1+\gamma) \bm{y}^T D \bm{y}} + \min_{\bm{y} \perp D \bm{1}} \left(- \frac{\bm{y}^T (\gamma I + A) \bm{y}}{\bm{y}^T \tilde{D}_\gamma \bm{y}} \right)\\
& \geq  \min_{\bm{y} \perp D \bm{1}} \left(\frac{\bm{y}^T (\gamma D + A) \bm{y}}{(1+\gamma) \bm{y}^T D \bm{y}} - \frac{\bm{y}^T (\gamma I + A) \bm{y}}{\bm{y}^T \tilde{D}_\gamma \bm{y}} \right) \\
& = \min_{\bm{y} \perp D \bm{1}} \bigg(\frac{(\bm{y}^T (\gamma D + A) \bm{y}) (\bm{y}^T (\gamma I + D) \bm{y} )}{((1+\gamma) \bm{y}^T D \bm{y}) (\bm{y}^T (\gamma I + D) \bm{y})} \\
& \qquad\qquad - \frac{(\bm{y}^T (\gamma I + A) \bm{y}) (((1+\gamma) \bm{y}^T D \bm{y})) }{((1+\gamma) \bm{y}^T D \bm{y}) (\bm{y}^T (\gamma I + D) \bm{y})} \bigg)\\
& = \min_{\bm{y} \perp D \bm{1}} \frac{ \gamma \left( 1 + \frac{\bm{y}^T A \bm{y}}{\bm{y}^T D \bm{y}} \frac{\bm{y}^T \bm{y}}{\bm{y}^T D \bm{y}} - \frac{\bm{y}^T \bm{y}}{\bm{y}^T D \bm{y}} - \frac{\bm{y}^T A \bm{y}}{\bm{y}^T D \bm{y}}   \right) }{((1+\gamma) \bm{y}^T D \bm{y}) (\bm{y}^T (\gamma I + D) \bm{y}) / (\bm{y}^T D \bm{y})^2} \\
& = \min_{\bm{y} \perp D \bm{1}} 
\frac{\gamma \left( 1 - \frac{\bm{y}^T A \bm{y}}{\bm{y}^T D \bm{y}}\right ) \left(1- \frac{\bm{y}^T \bm{y}}{\bm{y}^T D \bm{y}}\right)}
{((1+\gamma) \bm{y}^T D \bm{y}) (\bm{y}^T (\gamma I + D) \bm{y}) / (\bm{y}^T D \bm{y})^2}
\geq 0\\
\end{align*}
\end{proof}

\section{Hyperparameters}
In this subsection, we report the optimal hyperparameters that are searched for FB-GNNs.
\begin{table}[htbp]
  \centering
  \small
  \caption{Hyperparameters for baseline models}
    \begin{tabular}{c|c|cccc|c}
    \toprule
    \textbf{Datasets} & \textbf{Models\textbackslash{}Hyperparameters} & \textbf{lr} & \textbf{weight\_decay} & \textbf{dropout} & \textbf{hidden} & \textbf{results} \\
    \midrule
    \multirow{4}[2]{*}{\textbf{Cornell}} &  GCN  & 0.05  & 5.00E-04 & 0.4   & 32    & 60.81 \\
          & GAT   & 0.005 & 5.00E-04 & 0.6   & 8     & 59.19 \\
          &  GraphSAGE & 0.1   & 1.00E-04 & 0.1   & 32    & 82.97 \\
          & MLP   & 0.05  & 1.00E-04 & 0.5   & 32    & 85.14 \\
    \midrule
    \multirow{4}[1]{*}{\textbf{Wisconsin}} &  GCN  & 0.05  & 5.00E-04 & 0.3   & 32    & 63.73 \\
          & GAT   & 0.005 & 5.00E-04 & 0.2   & 8     & 60.78 \\
          &  GraphSAGE & 0.1   & 1.00E-04 & 0.1   & 32    & 87.84 \\
          & MLP   & 0.05  & 1.00E-04 & 0.4   & 32    & 87.25 \\
          \midrule
    \multirow{4}[1]{*}{\textbf{Texas}} &  GCN  & 0.05  & 5.00E-05 & 0.4   & 32    & 61.62 \\
          & GAT   & 0.005 & 5.00E-04 & 0.1   & 8     & 59.73 \\
          &  GraphSAGE & 0.1   & 5.00E-04 & 0.2   & 32    & 82.43 \\
          & MLP   & 0.05  & 5.00E-04 & 0.3   & 32    & 84.59 \\
    \midrule
    \multirow{4}[2]{*}{\textbf{Film}} &  GCN  & 0.05  & 5.00E-04 & 0.3   & 32    & 30.98 \\
          & GAT   & 0.005 & 1.00E-04 & 0.2   & 8     & 29.71 \\
          &  GraphSAGE & 0.1   & 5.00E-04 & 0.3   & 32    & 35.28 \\
          & MLP   & 0.05  & 5.00E-05 & 0.9   & 32    & 36.08 \\
    \midrule
    \multirow{4}[2]{*}{\textbf{Chameleon}} &  GCN  & 0.05  & 5.00E-05 & 0.3   & 32    & 61.34 \\
          & GAT   & 0.005 & 1.00E-04 & 0.3   & 8     & 61.95 \\
          &  GraphSAGE & 0.1   & 5.00E-04 & 0.5   & 32    & 47.32 \\
          & MLP   & 0.05  & 5.00E-05 & 0.3   & 32    & 46.21 \\
    \midrule
    \multirow{4}[2]{*}{\textbf{Squirrel}} &  GCN  & 0.05  & 5.00E-05 & 0.6   & 32    & 41.86 \\
          & GAT   & 0.005 & 1.00E-04 & 0.2   & 8     & 43.88 \\
          &  GraphSAGE & 0.1   & 5.00E-05 & 0.6   & \multicolumn{1}{c}{32} & 30.16 \\
          & MLP   & 0.05  & 5.00E-05 & 0.4   & 32    & 29.39 \\
    \midrule
    \multirow{4}[2]{*}{\textbf{Cora}} &  GCN  & 0.05  & 5.00E-05 & 0.9   & 32    & 87.32 \\
          & GAT   & 0.005 & 1.00E-04 & 0.7   & 8     & 88.07 \\
          &  GraphSAGE & 0.1   & 5.00E-05 & 0.6   & 32    & 85.98 \\
          & MLP   & 0.05  & 5.00E-04 & 0.4   & 32    & 74.81 \\
    \midrule
    \multirow{4}[2]{*}{\textbf{Citeseer}} &  GCN  & 0.05  & 5.00E-04 & 0.5   & 32    & 76.7 \\
          & GAT   & 0.005 & 5.00E-04 & 0.6   & 8     & 76.42 \\
          &  GraphSAGE & 0.1   & 1.00E-04 & 0.6   & 32    & 77.07 \\
          & MLP   & 0.05  & 5.00E-05 & 0.6   & 32    & 73.45 \\
    \midrule
    \multirow{4}[2]{*}{\textbf{Pubmed}} &  GCN  & 0.05  & 5.00E-05 & 0.2   & 32    & 88.24 \\
          & GAT   & 0.005 & 5.00E-05 & 0.1   & 8     & 87.81 \\
          &  GraphSAGE & 0.1   & 5.00E-05 & 0.2   & 32    & 88.59 \\
          & MLP   & 0.05  & 1.00E-04 & 0.1   & 32    & 87.86 \\
    \midrule
    \multirow{4}[2]{*}{\textbf{﻿CitationFull\_dblp}} &  GCN  & 0.05  & 5.00E-05 & 0.8   & 32    & 85.87 \\
          & GAT   & 0.005 & 1.00E-04 & 0.3   & 8     & 85.89 \\
          &  GraphSAGE & 0.05  & 5.00E-05 & 0.2   & 32    & 81.19 \\
          & MLP   & 0.05  & 5.00E-04 & 0.3   & 32    & 77.39 \\
    \midrule
    \multirow{4}[2]{*}{\textbf{ Coauthor\_CS}} &  GCN  & 0.05  & 5.00E-05 & 0.2   & 32    & 93.91 \\
          & GAT   & 0.005 & 5.00E-05 & 0.2   & 8     & 93.41 \\
          &  GraphSAGE & 0.1   & 5.00E-05 & 0.2   & 32    & 94.38 \\
          & MLP   & 0.05  & 5.00E-05 & 0.2   & 32    & 93.72 \\
    \midrule
    \multirow{4}[2]{*}{\textbf{ Coauthor\_Physics}} &  GCN  & 0.05  & 5.00E-05 & 0.1(0.3) & 32    & 96.84 \\
          & GAT   & 0.005 & 5.00E-04 & 0.5   & 8     & 96.32 \\
          &  GraphSAGE & -     & -     & -     & -     & OOM \\
          & MLP   & 0.05  & 5.00E-05 & 0.5(0.6) & 32    & 95.77 \\
    \midrule
    \multirow{4}[2]{*}{\textbf{ Amazon\_Computers}} &  GCN  & 0.05  & 5.00E-05 & 0.1   & 32    & 87.03 \\
          & GAT   & 0.005 & 5.00E-05 & 0.2   & 8     & 89.74 \\
          &  GraphSAGE & 0.1   & 5.00E-05 & 0.1   & 32    & 83.7 \\
          & MLP   & 0.05  & 5.00E-05 & 0.2   & 32    & 83.89 \\
    \midrule
    \multirow{4}[2]{*}{\textbf{ Amazon\_Photo }} &  GCN  & 0.05  & 5.00E-05 & 0.2   & 32    & 93.61 \\
          & GAT   & 0.005 & 5.00E-05 & 0.2   & 8     & 94.12 \\
          &  GraphSAGE &       &       &       &       &  \\
          & MLP   & 0.05  & 5.00E-05 & 0.4   & 32    & 90.87 \\
    \bottomrule
    \bottomrule
    \end{tabular}%
  \label{tab:hyperparameters_baseline}%
\end{table}%

\begin{table}[htbp]
\tiny
  \caption{Hyperparameters for FB-GNNs}
    \begin{tabular}{c|c|ccccc|ccc}
    \toprule
    \toprule
    \multirow{2}[4]{*}{\textbf{Datasets}} & \multirow{2}[4]{*}{\textbf{Models\textbackslash{}Hyperparameters}} & \multicolumn{5}{c|}{\textbf{symmetric renormalized adjacency matrix}} & \multicolumn{3}{c}{\textbf{lazy random walk matrix}} \\
\cmidrule{3-10}          &       & lr    & weight\_decay & dropout & hidden & results & weight\_decay & dropout & results \\
    \midrule
    \multirow{4}[2]{*}{\textbf{Cornell}} & MF-GCN & 0.05  & 1.00E-03 & 0.3   & 32    & 83.78 & 5.00E-04 & 0.3   & 85.14 \\
          & MF-GAT & 0.05  & 5.00E-04 & 0.2   & 8     & 87.3  & 5.00E-04 & 0.3   & \textbf{88.92} \\
          & MF-GraphSAGE & 0.05  & 5.00E-04 & 0.1   & 32    & 86.44 & 5.00E-04 & 0.1   & 86.49 \\
          & MF-Geom-GCN* & 0.05  & 5.00E-04 & 0.3   & 32    & 82.99 & 5.00E-04 & 0.3   & 83.41 \\
    \midrule
    \multirow{4}[1]{*}{\textbf{Wisconsin}} & MF-GCN & 0.05  & 5.00E-04 & 0.1   & 32    & 87.45 & 5.00E-04 & 0.4   & 87.25 \\
          & MF-GAT & 0.05  & 5.00E-04 & 0.3   & 8     & 85.88 & 5.00E-04 & 0.2   & \textbf{89.22} \\
          & MF-GraphSAGE & 0.05  & 5.00E-04 & 0.2   & 32    & 88.43 & 5.00E-04 & 0.3   & 87.45 \\
          & MF-Geom-GCN* & 0.05  & 5.00E-04 & 0.3   & 32    & 85.66 & 5.00E-04 & 0.3   & 86.1 \\
          \midrule
    \multirow{4}[1]{*}{\textbf{Texas}} & MF-GCN & 0.05  & 5.00E-04 & 0.1   & 32    & 84.59 & 1.00E-03 & 0.3   & 86.49 \\
          & MF-GAT & 0.05  & 1.00E-04 & 0.6   & 8     & 82.7  & 5.00E-04 & 0.4   & \textbf{88.65} \\
          & MF-GraphSAGE & 0.1   & 5.00E-04 & 0.2   & 32    & 86.22 & 5.00E-04 & 0.1   & 87.3 \\
          & MF-Geom-GCN* & 0.05  & 5.00E-04 & 0.3   & 32    & 83.41 & 5.00E-04 & 0.3   & 84.41 \\
    \midrule
    \multirow{4}[1]{*}{\textbf{Film}} & MF-GCN & 0.05  & 5.00E-03 & 0.2   & 32    & 35.47 & 5.00E-03 & 0.2   & 36.11 \\
          & MF-GAT & 0.05  & 5.00E-04 & 0.5   & 8     & 36    & 5.00E-04 & 0.5   & \textbf{36.83} \\
          & MF-GraphSAGE & 0.05  & 5.00E-04 & 0.1   & 32    & 35.88 & 5.00E-05 & 0.4   & 35 \\
          & MF-Geom-GCN* & 0.05  & 5.00E-05 & 0.6   & 32    & 34.26 & 5.00E-05 & 0.7   & 34.08 \\
          \midrule
    \multirow{4}[1]{*}{\textbf{Chameleon}} & MF-GCN & 0.05  & 5.00E-05 & 0.7   & 32    & \textbf{65.66} & 5.00E-05 & 0.7   & 61.07 \\
          & MF-GAT & 0.005 & 5.00E-04 & 0.5   & 8     & 63.75 & 5.00E-04 & 0.4   & 62.8 \\
          & MF-GraphSAGE & 0.05  & 5.00E-04 & 0.6   & 32    & 48.11 & 5.00E-04 & 0.6   & 48.57 \\
          & MF-Geom-GCN* & 0.05  & 5.00E-05 & 0.8   & 32    & 63.8  & 5.00E-05 & 0.8   & 62.23 \\
    \midrule
    \multirow{4}[1]{*}{\textbf{Squirrel}} & MF-GCN & 0.05  & 5.00E-05 & 0.6   & 32    & \textbf{50.56} & 5.00E-05 & 0.6   & 46.28 \\
          & MF-GAT & 0.005 & 5.00E-05 & 0.5   & 8     & 44.07 & 5.00E-04 & 0.5   & 43.56 \\
          & MF-GraphSAGE & 0.05  & 5.00E-04 & 0.5   & 32    & 33.24 & 5.00E-04 & 0.6   & 30.44 \\
          & MF-Geom-GCN* & 0.05  & 5.00E-05 & 0.7   & 32    & 40.02 & 5.00E-05 & 0.8   & 39.02 \\
          \midrule
    \multirow{4}[1]{*}{\textbf{Cora}} & MF-GCN & 0.05  & 5.00E-04 & 0.8   & 32    & 87.34 & 5.00E-04 & 0.7   & 87.46 \\
          & MF-GAT & 0.05  & 5.00E-05 & 0.6   & 8     & 87.34 & 1.00E-04 & 0.6   & 87.28 \\
          & MF-GraphSAGE & 0.05  & 5.00E-05 & 0.7   & 32    & 86.62 & 1.00E-04 & 0.6   & 86.35 \\
          & MF-Geom-GCN* & 0.05  & 1.00E-04 & 0.6   & 32    & \textbf{87.81} & 1.00E-04 & 0.7   & 87.3 \\
    \midrule
    \multirow{4}[2]{*}{\textbf{Citeseer}} & MF-GCN & 0.05  & 5.00E-03 & 0.3   & 32    & 76.42 & 5.00E-03 & 0.3   & 76.24 \\
          & MF-GAT & 0.05  & 1.00E-04 & 0.6   & 8     & 76.45 & 5.00E-04 & 0.6   & 76.92 \\
          & MF-GraphSAGE & 0.05  & 5.00E-05 & 0.7   & 32    & 78.01 & 5.00E-05 & 0.7   & 76.48 \\
          & MF-Geom-GCN* & 0.05  & 5.00E-04 & 0.6   & 32    & \textbf{78.02} & 5.00E-04 & 0.7   & 77.02 \\
    \midrule
    \multicolumn{1}{c|}{\multirow{3}[2]{*}{\textbf{Pubmed}}} & MF-GCN & 0.05  & 5.00E-04 & 0.3   & 32    & 89.74 & 5.00E-04 & 0.2   & 89.57 \\
          & MF-GAT & 0.05  & 5.00E-05 & 0.3   & 8     & 89.03 & 5.00E-05 & 0.4   & 89.43 \\
          & MF-GraphSAGE & 0.05  & 5.00E-05 & 0.3   & 32    & 89.05 & 5.00E-05 & 0.3   & 88.58 \\
    \midrule
    \multirow{3}[2]{*}{\textbf{﻿CitationFull\_dblp}} & MF-GCN & 0.05  & 5.00E-05 & 0.6   & 32    & \textbf{85.9} & 0.00E+00 & 0.6   & 85.51 \\
          & MF-GAT & 0.05  & 5.00E-05 & 0.6   & 8     & 84.94 & 5.00E-05 & 0.5   & 85.35 \\
          & MF-GraphSAGE & 0.05  & 5.00E-05 & 0.3   & 32    & 85.66 & 5.00E-05 & 0.6   & 85.02 \\
    \midrule
    \multirow{3}[2]{*}{\textbf{ Coauthor\_CS}} & MF-GCN & 0.05  & 1.00E-04 & 0.3   & 32    & \textbf{95.33} & 1.00E-04 & 0.4   & 95.31 \\
          & MF-GAT & 0.05  & 5.00E-05 & 0.4   & 8     & 94.13 & 5.00E-05 & 0.5   & 95.3 \\
          & MF-GraphSAGE & 0.05  & 5.00E-05 & 0.3   & 32    & 94.97 & 5.00E-05 & 0.5   & 95 \\
    \midrule
    \multirow{3}[2]{*}{\textbf{ Coauthor\_Physics}} & MF-GCN & 0.05  & 5.00E-05 & 0.4   & 32    & 97.03 & 5.00E-05 & 0.4   & \textbf{97.07} \\
          & MF-GAT & -     & -     & -     & -     & OOM   & -     & -     & OOM \\
          & MF-GraphSAGE & -     & -     & -     & -     & OOM   & -     & -     & OOM \\
    \midrule
    \multirow{3}[2]{*}{\textbf{ Amazon\_Computers}} & MF-GCN & 0.05  & 1.00E-05 & 0.4   & 32    & \textbf{91.54} & 5.00E-05 & 0.4   & 91.32 \\
          & MF-GAT & 0.05  & 5.00E-05 & 0..2  & 8     & 89.57 & 5.00E-05 & 0.3   & 91.1 \\
          & MF-GraphSAGE & 0.05  & 5.00E-05 & 0.6   & 32    & 88.01 & 5.00E-05 & 0.5   & 87.12 \\
    \midrule
    \multirow{3}[2]{*}{\textbf{ Amazon\_Photo }} & MF-GCN & 0.05  & 5.00E-05 & 0.4   & 32    & \textbf{95.57} & 1.00E-04 & 0.3   & 95.53 \\
          & MF-GAT & 0.05  & 1.00E-04 & 0.2   & 8     & 94.58 & 1.00E-04 & 0.4   & 94.88 \\
          & MF-GraphSAGE & 0.05  & 5.00E-05 & 0.5   & 32    & 91.4  & 5.00E-05 & 0.6   & 91.02 \\
    \bottomrule
    \bottomrule
    \end{tabular}%
  \label{tab:hyperparameters_fbgnn}
\end{table}%

\end{document}